\documentclass{article}


\PassOptionsToPackage{numbers, compress}{natbib}



\usepackage[final]{neurips_2024}


\usepackage[utf8]{inputenc} 
\usepackage[T1]{fontenc}    
\usepackage{hyperref}       
\usepackage{url}            
\usepackage{booktabs}       
\usepackage{amsfonts}       
\usepackage{nicefrac}       
\usepackage{microtype}      
\usepackage{xcolor}         

\usepackage{amsmath, amsthm}       
\usepackage{cleveref}       
\usepackage{physics}
\usepackage{array}
\usepackage{MnSymbol}
\usepackage{graphicx}
\usepackage{wrapfig}

\newcommand{\R}{\mathbb{R}}
\newcommand{\diag}{\mathrm{diag}}
\newcommand{\sset}[1]{\left\{ #1 \right\}}
\newcommand{\inner}[2]{\llangle #1, #2 \rrangle}
\newcommand{\virg}[1]{``#1''}
\newcommand{\simrp}{\overset{\mathrm{rp}}{\sim}}


\newcommand{\rev}[1]{\textcolor{black}{#1}}

\newtheorem{proposition}{Proposition}
\newtheorem{theorem}{Theorem}
\newtheorem{corollary}{Corollary}
\newtheorem{lemma}{Lemma}
\newtheorem{definition}{Definition}

\title{Topological obstruction to the training of shallow ReLU neural networks}

%

\author{%
Marco Nurisso \\  
  Politecnico di Torino \& CENTAI Institute\\
  Torino, 10100 - ITALY \\
  \texttt{marco.nurisso@polito.it} \\
  \And
  Pierrick Leroy \\
  Politecnico di Torino\\
  Torino, 10100 - ITALY \\
  \texttt{pierrick.leroy@polito.it} \\
  \AND
  Francesco Vaccarino \\
  Politecnico di Torino\\
  Torino, 10100 - ITALY \\
  \texttt{francesco.vaccarino@polito.it}  
}

\usepackage{soul}

\begin{document}

\maketitle

\begin{abstract}
Studying the interplay between the geometry of the loss landscape and the optimization trajectories of simple neural networks is a fundamental step for understanding their behavior in more complex settings.
This paper reveals the presence of topological obstruction in the loss landscape of shallow ReLU neural networks trained using gradient flow. 
We discuss how the homogeneous nature of the ReLU activation function constrains the training trajectories to lie on a product of quadric hypersurfaces whose shape depends on the particular initialization of the network's parameters. 
When the neural network's output is a single scalar, we prove that these quadrics can have multiple connected components, limiting the set of reachable parameters during training. We analytically compute the number of these components and discuss the possibility of mapping one to the other through neuron rescaling and permutation. 
In this simple setting, we find that the non-connectedness results in a topological obstruction, which, depending on the initialization, can make the global optimum unreachable.
We validate this result with numerical experiments.
\end{abstract}

\section{Introduction}
Training a neural network consists of navigating the complex geometry of the loss landscape to reach one of its deepest valleys.
Gradient descent and its variants are, by far, the most commonly used algorithms to perform this task.
While technically correct, the standard picture of the parameter space as Euclidean space with the trajectory rolling down the loss's surface in the steepest direction towards a minimum is slightly misleading because different choices of parameters can be \emph{observationally equivalent} i.e. encode the same function  \citep{dinh2017sharp}. The observational equivalence of parameters shape the loss landscape by imposing specific geometric structures on the parameter space.
Minima are not isolated points but high-dimensional manifolds with complex geometry \citep{freeman2016topology,cooper2018loss,simsek2021geometry} and the loss function's gradients and Hessian are constrained to obey some specific laws \citep{tanaka2020pruning,kunin2020neural}.
Gradient-based optimization methods, where the parameters are updated by performing discrete steps in the gradient's direction, are thus very much dependent on the symmetry-induced geometry \citep{du2018algorithmic,liang2019fisher}.

In this work, we provide a topological perspective on the constraints induced by some groups of network symmetries on the optimization trajectories.
Topology is a field of mathematics that studies the properties of a space that are preserved under continuous deformations.
Our main goal is to find and quantify in topological terms the impossibility of the training trajectories to freely explore the parameter space and get from any initialization to an optimal parameter.  This idea is formalized in the topological notion of \emph{connectedness} and, in particular, with the 0-th \emph{Betti number}, which counts the number of \emph{connected components} the space is composed of. The presence, or the absence, of topological \emph{obstructions} in the parameter space does not depend on the particular loss function or the training data but is intrinsic to the interplay between the geometry and the topology of the parameter space under the action of groups of symmetries inducing observationally equivalent networks.

\vspace{-0.2cm}
\paragraph{Main contributions.}
Our main contributions are the following.
\vspace{-0.8em}
\begin{enumerate}
    \itemsep-0.2em 
    \item We find that, for two-layer neural networks, the gradient flow trajectories lie on an invariant set, which can be factored as the product of quadric hypersurfaces.
    \item We analytically compute its Betti numbers, i.e., the number of connected components, holes, and higher-dimensional cavities.
    \item We find that the invariant set can be disconnected when the network's output dimension is 1, leading to a clear topological obstruction.
    \item We find that the obstruction is caused by \virg{pathological} neurons that cannot change the sign of their output weights when trained with gradient flow.
    \item We discuss the relation between the invariant set and the network's symmetries, finding that if we consider permutations, the number of effective connected components scales linearly in the number of pathological neurons.
    \item \rev{We perform numerical validations on controlled toy scenarios, displaying the effect of obstruction in practice.}
\end{enumerate}

\section{Related work}

A large body of work studies gradient flow and gradient descent optimization of one hidden layer networks with homogeneous activations. 
Convergence properties have been found for wide networks \citep{rotskoff2022trainability, sirignano2020mean} with bounded density at initialization \citep{mei2018mean}.
The implicit regularization provided is studied under various assumptions on: orthogonal input data \citep{boursier2022gradient}, initialization scale \citep{lyu2021gradient,boursier2022gradient}, wide (overparameterized regime) and infinitely wide \citep{chizat2020implicit}, linearly separable data \citep{lyu2021gradient, soudry2018implicit}. 
Deeper linear networks \citep{ji2018gradient} have also been studied. 

These works focus on proving convergence and understanding which (optimal) solution is found, whereas our work investigates the shape of the optimization space and focuses on cases where the optimum might not be reachable from a given initialization.

Closer to our work, \citet{safran2022effective} studies two-layer ReLU binary classifiers with single input and output, \rev{counting the number of their piecewise-linear components after training}. 
\citet{eberle2023e} focuses on the differential challenge posed by the ReLU activation function and studies properties like the uniqueness of the solution of a gradient flow differential equation for a given initialization.

ReLU activation is a nonnegative homogeneous function, meaning that particular weight rescalings do not change the neural network's function.
This is at the heart of the counterargument to flatness measures made by \citet{dinh2017sharp}, which shows that the Hessian eigenvalues can be made arbitrarily large in this way. 
\citet{neyshabur2015path} explores the effect such rescalings can have on the gradient and proposes a rescaling-invariant regularization.
Generally speaking, neural networks possess symmetries \citep{gluch2021noether}, and symmetries influence the geometry of training.
\citet{du2018algorithmic} studies how symmetry leads ReLU networks to automatically balance the neurons' weights.
\citet{kunin2020neural,zhao2022symmetries} studies how it constrains the gradient and Hessian matrix, leading to conservation laws w.r.t. gradient flow and \citet{tanaka2020pruning} leverages it to propose a network pruning scheme.
\rev{\citet{ziyinsymmetry} studies general mirror-reflect symmetries of the loss function and their effect on the weights of the trained network.}
Other conserved quantities stem from batch normalization's scale invariance \citep{ioffe2015batch, van2017l2}. 
The transition from gradient flow to finite step size gradient descent breaks the conservation laws, resulting in altered trajectories \citep{feng2019uniform,barrett2020implicit,kunin2020neural,smith2021origin}.

Numerous works have explored the geometry and topology of the loss landscape to obtain insight into a neural network's training behavior.
Motivated by the striking experimental observation that low loss points can be connected by simple curves \citep{draxler2018essentially,garipov2018loss} or line segments \citep{sagun2017empirical,frankle2020linear,fort2020deep}, a large body of literature tries to understand this phenomenon of mode connectivity under the topological lens of the connectedness of the loss function's sublevel sets \citep{freeman2016topology,nguyen2019connected,kuditipudi2019explaining}, especially for overparameterized neural networks \citep{cooper2018loss,cooper2020critical,simsek2021geometry}.
Another line of work approaches the connectivity of minima from another point of view, studying the presence \citep{yun2018small,safran2018spurious,venturi2019spurious} or absence \citep{liang2018understanding} of spurious minima, i.e. minima which are not global. \citet{bucarelli2024topological} analytically derives bounds on the sum of the Betti numbers of the loss landscape's sublevel set. Topological data analysis methods have also been exploited to numerically study the shape of the loss landscape \citep{barannikov2020topological,horoi2022exploring}.

\section{Setup and preliminaries}
\subsection{One-hidden layer neural network}
Unless otherwise stated, all vectors are column vectors, that is, $x=(x_1,\dots,x_d)^\top\in\R^d\cong\R^{d\times 1}$.
Let us consider a two-layer neural network $f(\cdot,\theta):\R^d\to\R^e$ specified by the function
\begin{equation}\label{eq:nn}
    f(x;\theta) = W^{(2)}\sigma(W^{(1)}x),
\end{equation}
where $x\in\R^d$ is the input, $\theta = (W^{(1)},W^{(2)})$ with $W^{(1)}\in\R^{l\times d}$ and $W^{(2)}\in\R^{e\times l}$ are the parameters,  $\sigma:\R\to\R$ is the component-wise activation function and $l$ is the number of neurons in the hidden layer.
Notice that we consider a network with no biases, as it allows us a discussion with lighter notation.
The case with biases is discussed in \Cref{appendix:bias}.

In this work, following \cite{du2018algorithmic}, we focus on the case where $\sigma$ is \emph{homogeneous}, namely $\sigma(x) = \sigma'(x)\cdot x$ for every $x$ and for every element of the sub-differential $\sigma'(x)$ if $\sigma$ is non-differentiable at $x$.
The commonly used ReLU ($\sigma(z) = \max\sset{z,0}$) and Leaky ReLU ($\sigma(z) = \max\sset{z,\gamma}$ with $0\leq\gamma\leq 1$) activation functions satisfy this property.

We call \emph{parameter space} the vector space $\Theta = \sset{\theta = (W^{(1)},W^{(2)})\ |\ W^{(1)}\in\R^{l\times d},W^{(2)}\in\R^{e\times l}}$.

It will also be convenient to examine the single hidden neurons and their associated parameters for the following discussions.
\begin{proposition}\label{prop:decomp}
For the two-layer neural network defined in \Cref{eq:nn}. Let $k=1,\dots, l$, let $(e_{11},e_{12},\dots,e_{ll})$ be the canonical basis of $\R^{l\times l}$ and $\Theta_k =\sset{\theta_k=(e_{kk}W^{(1)},W^{(2)}e_{kk}) \ | \ (W^{(1)},W^{(2)})\in \Theta}\subset \Theta$, then
$\Theta=\Theta_1\oplus\cdots\oplus\Theta_l$.
\end{proposition}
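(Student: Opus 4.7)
The plan is to establish the direct-sum decomposition by (i) exhibiting an explicit expression of every $\theta\in\Theta$ as a sum of elements $\theta_k\in\Theta_k$, and (ii) showing that such an expression is unique. The key observation driving both steps is that the matrices $e_{kk}\in\R^{l\times l}$ are mutually orthogonal idempotents whose sum is the identity, $\sum_{k=1}^l e_{kk}=I_l$, so that left-multiplication by $e_{kk}$ selects the $k$-th row and right-multiplication selects the $k$-th column.

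First, I would verify that each $\Theta_k$ is a linear subspace of $\Theta$: since the map $(W^{(1)},W^{(2)})\mapsto(e_{kk}W^{(1)},W^{(2)}e_{kk})$ is linear, its image $\Theta_k$ is a subspace. Concretely, $\Theta_k$ consists of those $(A,B)\in\R^{l\times d}\times\R^{e\times l}$ whose only possibly nonzero entries are in the $k$-th row of $A$ and in the $k$-th column of $B$, so $\dim\Theta_k=d+e$.

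Next, for any $\theta=(W^{(1)},W^{(2)})\in\Theta$, I would write
\begin{equation*}
\theta \;=\; \Bigl(\sum_{k=1}^l e_{kk}W^{(1)},\; \sum_{k=1}^l W^{(2)}e_{kk}\Bigr) \;=\; \sum_{k=1}^l \bigl(e_{kk}W^{(1)},\, W^{(2)}e_{kk}\bigr) \;=\; \sum_{k=1}^l \theta_k,
\end{equation*}
using $\sum_k e_{kk}=I_l$ in both components; this yields $\Theta=\Theta_1+\cdots+\Theta_l$. For the directness, I would suppose $\sum_{k=1}^l \theta_k=0$ with $\theta_k\in\Theta_k$ and note that the nonzero rows of the first components live in pairwise distinct rows (the $k$-th row for $\theta_k$), and analogously for columns of the second components; therefore each $\theta_k$ must vanish. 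Alternatively, a dimension count $\sum_k \dim\Theta_k = l(d+e)=\dim\Theta$ combined with the already-established spanning statement closes the argument.

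I do not anticipate a genuine obstacle: the decomposition is essentially the standard splitting of a matrix into its rows (resp.\ columns), repackaged neuron by neuron. The only point requiring mild care is to state the definition of $\Theta_k$ correctly so that the map producing $\theta_k$ is linear and well-defined, and to interpret the direct-sum symbol $\oplus$ as internal direct sum of subspaces of $\Theta$ (so that one checks both spanning and trivial pairwise intersection, or equivalently spanning plus the matching dimension count).
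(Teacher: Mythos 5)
Your proof takes essentially the same route as the paper: both exploit $\sum_k e_{kk}=I_l$ to write any $\theta$ as $\sum_k(e_{kk}W^{(1)},W^{(2)}e_{kk})$ and identify $\Theta_k$ with the row/column slice supported on index $k$. You are slightly more explicit than the paper in spelling out the directness of the sum (via disjoint supports or the dimension count), whereas the paper states it and lets the reader verify; this added care is correct and welcome but does not change the argument.
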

Details of the proof are provided in \Cref{appendix:parameter}.
Fixing $k\in\sset{1,\dots,l}$, we can consider $\Theta_k$ as the parameter space of the $k$-th hidden neuron, which consists of the inputs and output weights of neuron $k$, namely the rows and columns of $W^{(1)}$ and $W^{(2)}$, respectively.
For simplicity, when we work in $\Theta_k$, we write $W^{(1)}_k := e_{kk} W^{(1)}$ and $W^{(2)}_k := W^{(2)} e_{kk}$.
Interestingly, the decomposition of \Cref{prop:decomp} only holds for two-layer neural networks and will be crucial to the formulations of this paper's results.

\subsection{Symmetries and observationally equivalent networks}\label{sec:symmetries}
It is well known that the properties of the activation function heavily influence the geometry of the parameter space $\Theta$.
The activation function's commutativity with some classes of transformations can result in the latter having no effect on the function implemented by the neural network. 
This means that, in general, the mapping from the parameter space to the hypothesis class of functions is not injective.
Following the terminology in \citet{dinh2017sharp}, we say that two parameters $\theta_1,\theta_2\in\Theta$ are \emph{observationally equivalent}, if they encode the same function $f(\cdot;\theta_1) = f(\cdot,\theta_2)$ and write $\theta_1\sim\theta_2$.

In the case of homogeneous activations (ReLU or Leaky ReLU), we describe two kinds of transformations that send a parameter $\theta$ into an observationally equivalent one.

\paragraph{Neuron rescaling.}
The input weights of a hidden neuron can be rescaled by a positive scalar $\alpha>0$ provided that its output weights are rescaled by the inverse $\alpha^{-1}$ (top panel of \Cref{fig:actions}a).
We formalize this as the action of the group $\R_+$ of positive real numbers on $\Theta_k$:
\begin{equation}\label{eq:rescaling_action_neuron}
\setlength\arraycolsep{0pt}
T\colon \begin{array}[t]{ >{\displaystyle}r >{{}}c<{{}}  >{\displaystyle}l } 
          \R_+\times\Theta_k &\to& \Theta_k \\ 
          (\alpha,\theta_k) &\mapsto& T_\alpha(\theta_k)= \left(\alpha\cdot W^{(1)}_k,  \frac{1}{\alpha}\cdot W^{(2)}_k\right).
         \end{array}
\end{equation}  

This action can be naturally extended to the space of all parameters by considering the possibility of rescaling all hidden neurons simultaneously by different factors.
If $\alpha=(\alpha_1,\dots,\alpha_l)\in\R^l_+$
\begin{equation}\label{eq:rescaling_action}
T_\alpha(\theta) = (\diag(\alpha) W^{(1)}, W^{(2)}\diag(\alpha)^{-1}).
\end{equation}
Given that $\sigma(a z) = a\sigma(z)$ when $a\in\R_+$, we see how $\theta \sim T_\alpha(\theta)$.

We write $T(\theta)$ to denote the orbit of a parameter $\theta$ under the action of $T$, i.e. the set of all parameters obtained from $\theta$ by arbitrarily rescaling the neurons $T(\theta) = \sset{T_{\alpha}(\theta):\ \alpha\in\R^l_+}$.

\paragraph{Permutations of the neurons.}
Besides rescaling, we can obtain an observationally equivalent network by permuting the hidden neurons in such a way as to preserve their input and output weights (bottom panel of \Cref{fig:actions}a).

Given the symmetric group on $l$ elements $\mathfrak{S}_l$ of the permutations of $\sset{1,\dots,l}$, we write the action
\begin{equation}\label{eq:neuron_permutation}
\setlength\arraycolsep{0pt}
P\colon \begin{array}[t]{ >{\displaystyle}r >{{}}c<{{}}  >{\displaystyle}l } 
          \mathfrak{S}_l\times\Theta &\to& \Theta \\ 
          (\pi,\theta) &\mapsto& P_\pi(\theta)= (R_\pi W^{(1)},  W^{(2)}R_\pi^\top).
         \end{array}
\end{equation}

where $R_\pi$ is the $l\times l$ row-permutation matrix associated to the permutation $\pi$.

Given that the activation function $\sigma$ is applied component-wise, we have that it commutes with $R_\pi$, namely
\[
f(x;P_\pi(\theta)) = W^{(2)}R_\pi^\top \sigma(R_\pi W^{(1)}x) = W^{(2)}R_\pi^\top R_\pi \sigma(W^{(1)}x) = W^{(2)}\sigma(W^{(1)}x) = f(x;\theta)
\]
and thus $P_\pi(\theta)\sim\theta$ because $R_\pi^\top = (R_\pi)^{-1}$.

Having defined these two actions, we say that $\theta$ and $\theta'$ are \emph{observationally equivalent by rescalings and permutations} if $\theta'$ can be obtained from $\theta$ by a finite sequence of actions of $T$ and $P$ \rev{or, equivalently thanks to \Cref{lemma:commute} in the Appendix, if there exists a rescaling $\alpha$ and a permutation $\pi$ such that $\theta' = P_\pi \circ T_\alpha(\theta)$}.
In this case, we write $\theta\simrp \theta'$.

\begin{figure}
    \centering
    \includegraphics[width=\linewidth]{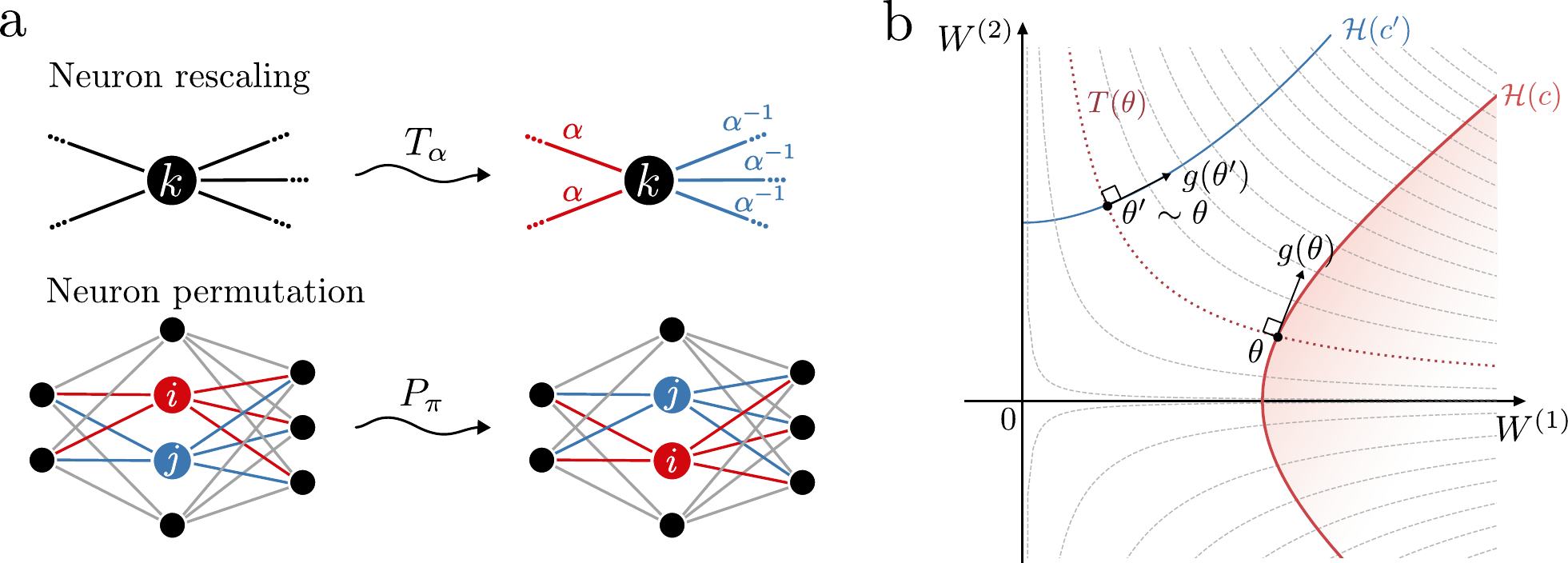}
    \caption{{\bf a.} Depiction of the two group actions acting on the space of the network's parameters: the neuron rescaling of \Cref{eq:rescaling_action_neuron} (top) and the neuron permutation of \Cref{eq:neuron_permutation} (bottom). {\bf b.} Depiction of the geometry of the parameter space induced by the rescaling invariance of ReLU networks. The dotted lines denote the orbits $T(\theta)$ while the solid lines represent the invariant sets $\mathcal{H}(c)$ associated with $\theta$ and the one associated with its rescaled version $\theta'$. Notice how the gradient of the loss $g(\theta)$ is tangent to $\mathcal{H}(c)$ and orthogonal to $T(\theta)$.}
    \label{fig:actions}
\end{figure}

\subsection{Conserved quantities and the invariant hyperquadrics}
The presence of symmetries in the neural network's parameter-function map results in a specific geometric structure in the loss landscape.
Let indeed $D=\sset{(x_i,y_i)\in\R^d\times\R^e}_{i=1}^N$ be a training set of $N$ input-output pairs and fix a loss function $L:\Theta\to \R$ which depends on the parameters only through the output of the neural network \eqref{eq:nn}, that is
\begin{equation}\label{eq:loss}
    L(\theta) = \frac{1}{N}\sum_{i=1}^N \ell(f(x_i;\theta),y_i)
\end{equation}
where $\ell:\R^e\times\R^e\to\R$ is differentiable. 
In this work, as empirical risk minimization, we consider the continuous time version of the gradient descent (GD) algorithm (with learning rate $h>0$)
\begin{equation}\label{eq:gd}
    \theta_{t+1} = \theta_{t} - h\nabla_\theta L(\theta_t)
\end{equation}
named \emph{gradient flow} (GF), and defined as

\begin{equation}\label{eq:gf}
    \dfrac{\dd}{\dd t}\theta(t) \in -\nabla_\theta L(\theta(t)):= -g(\theta(t))
\end{equation}
where $\nabla_\theta L(\theta(t))$ is the Clarke sub-differential \citep{clarke2008nonsmooth} which takes into account the parameters $\theta$ where $L(\theta)$ is non-differentiable.
Given that the loss function $L$ depends on the parameters only through $f$, its value at $\theta$ must be constant over the orbit $T(\theta)$.
This, together with the fact that the gradient of a differentiable function at a point is orthogonal to the level set at that point, means that
\begin{equation}\label{eq:perp}
g(\theta) \perp T(\theta)
\end{equation}
at any parameter $\theta$ where $L(\theta)$ is differentiable, as represented in \Cref{fig:actions}b.
This orthogonality condition constrains the possible values of the gradient and, by extension, the possible gradient flow trajectories.
In particular, as proven in \citet{liang2019fisher,tanaka2020pruning}, \Cref{eq:perp} is equivalent to
\begin{equation}\label{eq:balance}
\sum_{i=1}^d W^{(1)}_{ki}g^{(1)}_{ki} - \sum_{j=1}^e W^{(2)}_{jk}g^{(2)}_{jk} = 0\ \ \forall k=1,\dots,l.
\end{equation}

For convenience of notation, we define, for $k=1,\dots, l$, the following bilinear forms on $\Theta$, which help us describe the geometry induced by the rescaling symmetry.
If $\theta = (W^{(1)},W^{(2)})$ and $\eta = (V^{(1)},V^{(2)})$, we define
\begin{equation}\label{eq:inner}
    \inner{\theta}{\eta}_k = \sum_{i=1}^d W^{(1)}_{ki}V^{(1)}_{ki} - \sum_{j=1}^e W^{(2)}_{jk}V^{(2)}_{jk}
\end{equation}
which, notice, only depends on the $k$-th row of $W^{(1)}$ and $k$-th column of $W^{(2)}$ meaning that we can equivalently see it as a bilinear form on $\Theta_k$.
$\Theta_k$, together with $\inner{\cdot}{\cdot}_k$ is a \emph{pseudo-Euclidean space}.

With the notation given by \Cref{eq:inner}, we see that \Cref{eq:balance} can be simply rewritten as $\inner{\theta}{g(\theta)}_k = 0$ for every neuron $k$.
This condition, akin to orthogonality w.r.t. the bilinear form of \Cref{eq:inner}, implies that, under gradient flow optimization,
\begin{equation}\label{eq:conserved}
\dfrac{\dd}{\dd t}\inner{\theta}{\theta}_k = \rev{2}\inner{\Dot{\theta}}{\theta}_k = -\rev{2}\inner{g(\theta)}{\theta}_k = 0 \ \ \forall k=1,\dots,l.
\end{equation}

This result, first obtained in \rev{\citet{saxe2013exact} for linear networks} and discussed in \citet{du2018algorithmic,liang2019fisher,kunin2020neural}, tells us 
that the rescaling symmetry results in the quantities $\inner{\theta}{\theta}_k$ being conserved.
This means that the difference between the Euclidean norm of the inputs and the outputs is constant for each neuron throughout the GF training trajectory.
Moreover, under the condition of homogeneity of the activation function, \citet{du2018algorithmic} proves that \Cref{eq:conserved} holds even at non-differentiable points of $L$ and in the case of multiple layers. 

\paragraph{Invariant sets.}
Assume that at the initialization $\theta_0$ we have $\inner{\theta_0}{\theta_0}_k = c_k$, for all $k$, then \Cref{eq:conserved} implies that the GF trajectory will lie on the set characterized by the system of equations $\inner{\theta}{\theta}_k = c_k$ for $k =1,\dots,l$.
This subset is mapped to itself under the GF dynamics by \Cref{eq:conserved} (see \Cref{fig:actions}b) and constitutes the main object of our study. 
\begin{definition}[Invariant set]
Given $c = (c_1,\dots, c_l)$, we call \emph{invariant set} the subset $\mathcal{H}(c)\subseteq\Theta$ given by the equations $\inner{\theta}{\theta}_k = c_k\ \forall k=1,\dots,l$.
\end{definition}
If we look at each single equation (i.e. to each hidden neuron), we see that \Cref{eq:conserved} can be written as
\begin{equation}\label{eq:hyperbola}
\sum_{i=1}^d \left(W^{(1)}_{ki}\right)^2 - \sum_{j=1}^e \left(W^{(2)}_{jk}\right)^2 = c_k
\end{equation}
which corresponds to a \emph{hyperquadric} (or quadric hypersurface) in $\Theta_k$. 
We denote with $\mathcal{Q}(c_k)\subseteq\Theta_k$ this hypersurface and call it the \emph{invariant hyperquadric} associated to the $k$-th hidden neuron.

Here $c_k\in\R$ takes the role of a label associated with the $k$-th hidden neuron, which, we see in the next section, plays a key role in specifying the shape of $\mathcal{Q}(c_k)$.
\Cref{fig:quadrics}a shows how, for $d=2$ and $e=1$, $\mathcal{Q}(c_k)$ is an hyperboloid with 1 sheet (connected) if $c_k>0$ and 2 sheets if $c_k<0$.

\section{Topology of the invariant set}
As we discussed above, \Cref{eq:conserved} tells us that gradient flow trajectories can't explore the whole space $\Theta$ but are constrained to lie on the invariant set $\mathcal{H}(c)$.
The values of $c$, in turn, depend on the initialization and, we see from \Cref{eq:hyperbola}, quantify the balance between the norms of input and output weights in every hidden neuron.

The goal of this section is to provide a topological characterization of $\mathcal{H}(c)$ that can tell us something about the presence or absence of fundamental \emph{obstructions} to the network's training process.
With obstruction, we mean the impossibility of a GF trajectory to travel freely from one point to the other in $\mathcal{H}(c)$.
We refer the reader to \Cref{section:topology} for an essential overview of some of the topological concepts that we rely on in the next paragraphs.

\paragraph{Counting high-dimensional holes.}
Our topological characterization will be framed using \emph{Betti numbers}.
Betti numbers are well-known topological invariants given by a sequence of natural numbers that intuitively encode the number of higher-dimensional holes and cavities present in space.
In particular, the 0-th Betti number of a space $X$, $\beta_0(X)$ corresponds to the number of connected components of $X$ and thus will be fundamental for our goal of identifying obstructions.

The invariant set $\mathcal{H}(c)$ is given as the set of solutions of $l$ polynomial equations of degree 2 sharing no variables.
Furthermore, in the setting of two-layer neural networks, we can leverage the fact that the parameter space can be decomposed into the parameter spaces of the hidden neurons.
This, in turn, allows us to decompose the invariant set as the product of the neurons' invariant hyperquadrics, greatly simplifying our study.
\begin{lemma}\label{lemma:product}
In a two-layer ReLU neural network, the invariant set $\mathcal{H}(c)$ is homeomorphic to the Cartesian product of the hidden neurons' invariant hyperquadrics, that is
\begin{equation}
    \mathcal{H}(c) \cong \mathcal{Q}(c_1)\times\cdots\times\mathcal{Q}(c_l).
\end{equation}
\end{lemma}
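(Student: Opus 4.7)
The plan is to combine the parameter-space decomposition from \Cref{prop:decomp} with the key algebraic feature of the bilinear forms $\langle \cdot, \cdot \rangle_k$ defined in \Cref{eq:inner}: each one depends only on coordinates lying inside a single summand $\Theta_k$. Because $\mathcal{H}(c)$ is cut out by one equation of this form per neuron, the defining system decouples along the decomposition, which is exactly what a product structure requires.

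First, I would use \Cref{prop:decomp} to produce the canonical linear isomorphism
\begin{equation*}
\Phi : \Theta \longrightarrow \Theta_1 \times \cdots \times \Theta_l, \qquad \theta \longmapsto (\theta_1,\dots,\theta_l),
\end{equation*}
where $\theta_k = (W^{(1)}_k, W^{(2)}_k)$ collects the $k$-th row of $W^{(1)}$ and $k$-th column of $W^{(2)}$. As this is a bijective linear map between finite-dimensional real vector spaces, it is automatically a homeomorphism when $\Theta$ carries its Euclidean topology and the codomain carries the product topology (the direct-sum and product topologies agree in finite dimension).

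Next, I would observe directly from \Cref{eq:inner} that $\langle \theta, \theta \rangle_k$ involves only the entries of $W^{(1)}_k$ and $W^{(2)}_k$, so under $\Phi$ the constraint $\langle \theta,\theta\rangle_k = c_k$ factors through the $k$-th projection and is equivalent to $\theta_k \in \mathcal{Q}(c_k)$. Therefore
\begin{equation*}
\Phi(\mathcal{H}(c)) = \bigl\{(\theta_1,\dots,\theta_l) : \theta_k \in \mathcal{Q}(c_k)\ \forall k\bigr\} = \mathcal{Q}(c_1) \times \cdots \times \mathcal{Q}(c_l),
\end{equation*}
so the restriction of $\Phi$ is a continuous bijection onto the product equipped with the subspace topology from $\Theta_1 \times \cdots \times \Theta_l$, whose inverse is also continuous. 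This yields the claimed homeomorphism.

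I do not foresee a substantive obstacle: the whole argument is essentially bookkeeping, made possible by the fact that the conserved quantities decouple neuron by neuron. The only point worth flagging is that this decoupling is specific to two-layer networks — it is exactly the decomposition $\Theta = \Theta_1 \oplus \cdots \oplus \Theta_l$ of \Cref{prop:decomp} that fails in deeper architectures, where input and output weights of a hidden unit would live in different layers and no longer define a single summand. I would mention this briefly to motivate why \Cref{lemma:product} is stated for the two-layer case.
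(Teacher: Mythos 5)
Your proof is correct and mirrors the paper's argument: both rely on the linear isomorphism of \Cref{prop:decomp} (you call it $\Phi$, the paper uses its inverse $\varphi$) and the observation that each defining equation $\langle\theta,\theta\rangle_k = c_k$ depends only on the $\Theta_k$-coordinates, so the constraint system decouples and $\Phi$ restricts to a homeomorphism between $\mathcal{H}(c)$ and the product of hyperquadrics. The extra remark about why this decoupling fails beyond two layers is not in the paper's proof but is consistent with its Limitations section.
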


\Cref{lemma:product} tells us that we can understand the topology of $\mathcal{H}(c)$ by studying independently its factors.
Moreover, the hyperquadrics we encounter here are well-studied objects for which the next proposition (proven in \Cref{proof:homeo}) gives a topological characterization.
\begin{proposition}\label{prop:homeo}
If $c_k > 0$, $\mathcal{Q}(c_k)$ is a topological manifold homeomorphic to $\R^e\times S^{d-1}$. 
If $c_k<0$, $\mathcal{Q}(c_k)$ is a topological manifold homeomorphic to $\R^d\times S^{e-1}$. 
If $c_k = 0$, $\mathcal{Q}(0)$ 
is a contractible space.
\end{proposition}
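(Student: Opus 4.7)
The plan is to work in $\Theta_k\cong\R^d\times\R^e$ with coordinates $u=W^{(1)}_k\in\R^d$ and $v=W^{(2)}_k\in\R^e$, so that $\mathcal{Q}(c_k)$ is cut out by the single equation $\|u\|^2-\|v\|^2=c_k$. The three cases then split according to the sign of $c_k$, and in each case I would exhibit an explicit homeomorphism (or explicit contraction) rather than appealing to general quadric theory.

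For the case $c_k>0$, the key observation is that $v$ is unconstrained while $u$ is forced to have squared norm $c_k+\|v\|^2>0$. I would therefore define
\begin{equation*}
\Phi\colon\R^e\times S^{d-1}\longrightarrow\mathcal{Q}(c_k),\qquad (v,\hat u)\mapsto\bigl(\sqrt{c_k+\|v\|^2}\,\hat u,\;v\bigr),
\end{equation*}
and check that $\Phi$ is a bijection whose inverse $(u,v)\mapsto(v,u/\|u\|)$ is well-defined (since $\|u\|^2\geq c_k>0$) and continuous. Both maps are evidently continuous, so $\Phi$ is a homeomorphism. Smoothness of $\Phi$ together with the non-vanishing of the gradient of $\|u\|^2-\|v\|^2$ on $\mathcal{Q}(c_k)$ (because $u\neq 0$ there) gives the manifold structure. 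The case $c_k<0$ is completely symmetric after swapping the roles of $u$ and $v$: now $u$ is unconstrained and $v$ lies on the sphere of radius $\sqrt{-c_k+\|u\|^2}$, yielding a homeomorphism $\R^d\times S^{e-1}\to\mathcal{Q}(c_k)$.

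For the degenerate case $c_k=0$, the set $\mathcal{Q}(0)=\{(u,v):\|u\|=\|v\|\}$ contains the origin, and I would simply show it is star-shaped at $(0,0)$. Indeed, for any $(u,v)\in\mathcal{Q}(0)$ and any $t\in[0,1]$,
\begin{equation*}
\|(1-t)u\|^2-\|(1-t)v\|^2=(1-t)^2\bigl(\|u\|^2-\|v\|^2\bigr)=0,
\end{equation*}
so the straight-line homotopy $H\colon\mathcal{Q}(0)\times[0,1]\to\mathcal{Q}(0)$, $H((u,v),t)=((1-t)u,(1-t)v)$, is a deformation retraction onto the origin. Hence $\mathcal{Q}(0)$ is contractible.

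The proof is essentially a direct calculation and I do not expect a real obstacle; the only subtle point is ensuring that the inverse of $\Phi$ is continuous at all points, which follows from the uniform lower bound $\|u\|\geq\sqrt{|c_k|}>0$ in the non-degenerate cases. Everything else is routine verification, and the manifold structure comes for free from the fact that the defining polynomial has non-vanishing gradient on its zero set whenever $c_k\neq 0$.
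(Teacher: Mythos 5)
Your proof is correct and follows essentially the same route as the paper's: an explicit radial homeomorphism $(v,\hat u)\mapsto(\sqrt{c_k+\|v\|^2}\,\hat u,v)$ for $c_k>0$ (and its mirror for $c_k<0$), with the uniform lower bound $\|u\|\geq\sqrt{c_k}$ guaranteeing continuity of the inverse, plus a straight-line contraction to the origin for $c_k=0$. The only cosmetic difference is that you construct the map into $\mathcal{Q}(c_k)$ while the paper writes its inverse, and you make explicit the regular-value argument for the manifold structure, which the paper leaves implicit.
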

Leveraging the decomposition of \Cref{lemma:product} and the characterization of the factors given by \Cref{prop:homeo}, we can explicitly compute all the Betti numbers of the invariant set.
We give the next result in terms of the \emph{Poincaré polynomial} of $\mathcal{H}(c)$, namely the polynomial whose coefficients are the Betti numbers (see \Cref{section:topology}).

\begin{theorem}\label{prop:betti}
Let $l_+,l_-,l_0$ be the number of positive, negative, and zero components of $c$, respectively. 
The Poincaré polynomial of $\mathcal{H}(c)$ is given by
\begin{equation}
    p_{\mathcal{H}(c)}(x) = (1+x^{d-1})^{l_+}(1+x^{e-1})^{l_-}
\end{equation}
\end{theorem}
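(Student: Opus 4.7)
The plan is to combine the two preceding results with the Künneth formula, which expresses the Poincaré polynomial of a product as the product of the Poincaré polynomials of the factors (over any field, since all the spaces involved have torsion-free homology).

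First I would invoke \Cref{lemma:product} to obtain the homeomorphism
\[
\mathcal{H}(c) \cong \mathcal{Q}(c_1)\times\cdots\times\mathcal{Q}(c_l),
\]
so that (by Künneth)
\[
p_{\mathcal{H}(c)}(x) = \prod_{k=1}^{l} p_{\mathcal{Q}(c_k)}(x).
\]
Next I would compute each factor using \Cref{prop:homeo}. When $c_k>0$, $\mathcal{Q}(c_k)\cong \R^e\times S^{d-1}$, which deformation retracts onto $S^{d-1}$; its Betti numbers are $1$ in degrees $0$ and $d-1$ and zero elsewhere, so $p_{\mathcal{Q}(c_k)}(x) = 1 + x^{d-1}$. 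Symmetrically, when $c_k<0$, $\mathcal{Q}(c_k)\simeq S^{e-1}$ and $p_{\mathcal{Q}(c_k)}(x) = 1 + x^{e-1}$. Finally, when $c_k=0$, \Cref{prop:homeo} asserts that $\mathcal{Q}(0)$ is contractible, hence $p_{\mathcal{Q}(0)}(x) = 1$.

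Grouping the factors according to the sign of $c_k$ then yields
\[
p_{\mathcal{H}(c)}(x) = (1+x^{d-1})^{l_+}(1+x^{e-1})^{l_-}\cdot 1^{l_0},
\]
which is the claimed formula.

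I do not expect any real obstacle: the content lies entirely in the previous lemma and proposition, and the only subtlety is checking that Künneth applies cleanly, which it does since spheres (and contractible spaces) have free homology, so the Tor terms vanish and the Betti polynomials multiply. One should briefly note the low-dimensional edge cases to reassure the reader: if $d=1$ then $S^{d-1}=S^0$ has Poincaré polynomial $2 = 1 + x^0$, in agreement with the formula (and symmetrically for $e=1$), so no separate argument is needed in those boundary regimes.
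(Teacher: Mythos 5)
Your argument matches the paper's proof essentially step for step: decompose $\mathcal{H}(c)$ via \Cref{lemma:product}, apply the Künneth formula to multiply Poincaré polynomials, and read off each factor from \Cref{prop:homeo} using $p_{S^n}(x)=1+x^n$ and $p_{\R^n}(x)=1$. Your extra remarks on torsion-freeness and the $d=1$ or $e=1$ boundary cases are sound additions but not a different route.
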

This result, which is proven in \Cref{proof:betti}, contains a wealth of topological information as it gives us the exact number of holes and cavities of any order, depending on the network's hyperparameters ($d,e$) and initialization ($l_+,l_-$).
In the rest of this work, we focus only on the 0-th Betti number as the non-connectedness of $\mathcal{H}(c)$ provides a clear obstruction to the GF trajectories.

\paragraph{Connectedness of the invariant set.}
With regard to the connectedness of $\mathcal{H}(c)$, we can leverage \Cref{prop:betti} to obtain the exact number of connected components.
\begin{corollary}\label{cor:beta0}
The $0$-th Betti number $\beta_0$ of $\mathcal{H}(c)$, corresponding to the number of its connected components, is given by
\begin{equation}\label{eq:beta0}
    \beta_0 =
    \begin{cases}
        1 &\text{ if } d,e>1\\
        2^{l_+} &\text{ if } d=1,e>1\\
        2^{l_-} &\text{ if } d>1,e=1\\
        2^{l_+ + l_-} &\text{ if } d=1,e=1
    \end{cases}
\end{equation}
\end{corollary}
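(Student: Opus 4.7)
The plan is to derive Corollary \ref{cor:beta0} as a direct evaluation of the Poincaré polynomial from Theorem \ref{prop:betti}. By definition, the $0$-th Betti number $\beta_0$ of a space equals the constant term of its Poincaré polynomial, i.e. $\beta_0(\mathcal{H}(c)) = p_{\mathcal{H}(c)}(0)$. So the whole task reduces to evaluating
\[
(1+x^{d-1})^{l_+}(1+x^{e-1})^{l_-}
\]
at $x = 0$, where the crucial subtlety is that $x^{d-1}\big|_{x=0}$ behaves differently depending on whether $d-1 = 0$ or $d-1 \geq 1$, and likewise for $e-1$.

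First I would split on the four cases according to whether $d > 1$ or $d = 1$, and $e > 1$ or $e = 1$. When $d > 1$ and $e > 1$, both exponents $d-1$ and $e-1$ are at least $1$, so both factors become $(1+0)^{l_+}(1+0)^{l_-} = 1$, giving $\beta_0 = 1$. When $d = 1$ and $e > 1$, the first factor becomes $(1 + x^0)^{l_+} = 2^{l_+}$, which is constant, while the second factor evaluates to $1$ at $x = 0$; hence $\beta_0 = 2^{l_+}$. The case $d > 1$, $e = 1$ is symmetric and yields $\beta_0 = 2^{l_-}$. Finally, when $d = e = 1$, both factors are constants $2^{l_+}$ and $2^{l_-}$, multiplying to $2^{l_+ + l_-}$.

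There is essentially no obstacle here: the corollary is a bookkeeping consequence of Theorem \ref{prop:betti}. The only point worth flagging in the write-up is the convention $x^0 = 1$ (even at $x = 0$), which is what makes $d-1$ or $e-1$ being zero collapse the associated factor into a genuine doubling rather than sending it to zero. If desired, one can alternatively verify each case geometrically via Lemma \ref{lemma:product} and Proposition \ref{prop:homeo}: when $d > 1$, the sphere $S^{d-1}$ is connected so each positive-$c_k$ factor $\mathcal{Q}(c_k) \cong \mathbb{R}^e \times S^{d-1}$ contributes one component, whereas when $d = 1$, $S^{0}$ has two points so each such factor contributes two components, and the number of connected components of a finite product is the product of the numbers of components of the factors.
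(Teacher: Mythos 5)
Your proof is correct and follows the same route as the paper: Corollary~\ref{cor:beta0} is read off as the degree-zero coefficient of the Poincar\'e polynomial from Theorem~\ref{prop:betti}, with the only subtlety being the collapse of $(1+x^{d-1})$ or $(1+x^{e-1})$ to the constant $2$ when $d=1$ or $e=1$. The evaluation at $x=0$ and the case split are a clean way to make that bookkeeping explicit.
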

\begin{proof}
This can be directly obtained from the coefficient of degree 0 of the Poincaré polynomial obtained through \Cref{prop:betti}.
\end{proof}
What we see in \Cref{eq:beta0} is that in most cases, the invariant set is connected, and gradient flow has no topological limitations in exploring the whole of $\mathcal{H}(c)$.
Instead, when the hidden neurons have only one input or only one output, the space is fragmented into several components whose number scales exponentially in $l_+$ or $l_-$, respectively. 

Let us focus on the more interesting case where $d>1$ and $e=1$.

\begin{figure}
    \centering
    \includegraphics[width=\linewidth]{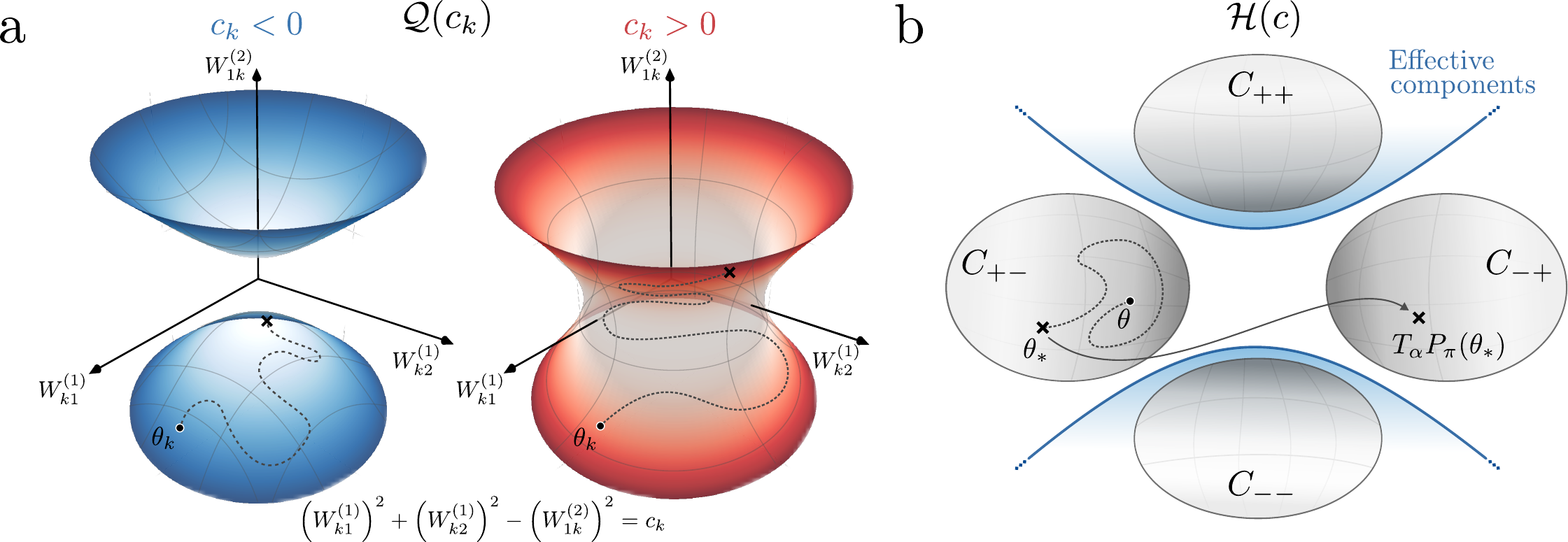}
    \caption{{\bf a.} The invariant hyperquadric $\mathcal{Q}(c_k)$ of a neuron with two inputs ($d=2$) and one output ($e=1$) in the cases where $c_k<0$ (left) and $c_k>0$ (right). {\bf b.} Depiction of the invariant set $\mathcal{H}(c)$ in the case where $l_- = 2$ so that there are $2^{l_-} = 4$ connected components. $C_{\pm\mp}$ denotes the connected component such that $s = (\pm 1,\mp 1)$. The blue lines separate the different effective components of $\mathcal{H}(c)$.}
    \label{fig:quadrics}
\end{figure}

\begin{corollary}\label{cor:disconnected}
If the output of a two-layer ReLU neural network is a single scalar $e=1$, its input has dimension $d>1$, and the initial parameter $\theta_0$ is such that $\inner{\theta_0}{\theta_0}_k < 0$ for $l_->0$ hidden neurons, then the set $\mathcal{H}(c)$ is disconnected and has $2^{l_-}$ connected components.
\end{corollary}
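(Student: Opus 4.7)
The plan is to derive this statement as an immediate specialization of \Cref{cor:beta0}. Substituting $d>1$ and $e=1$ into the case analysis for $\beta_0$ yields $\beta_0 = 2^{l_-}$, and since $l_->0$ we have $2^{l_-}\geq 2$, so $\mathcal{H}(c)$ has strictly more than one connected component and is therefore disconnected. This one-line derivation is really all the corollary requires.

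For transparency, and to make the exponent $2^{l_-}$ geometrically intuitive, I would also outline a direct argument bypassing the full Poincaré polynomial of \Cref{prop:betti}. By \Cref{lemma:product}, $\mathcal{H}(c)$ is homeomorphic to the product $\mathcal{Q}(c_1)\times\cdots\times\mathcal{Q}(c_l)$, so its connected components factor as the Cartesian product of the components of the individual factors. \Cref{prop:homeo} then classifies each factor under our hypotheses: if $c_k>0$, then $\mathcal{Q}(c_k)\cong\R\times S^{d-1}$, which is connected because $d>1$; if $c_k=0$, the factor is contractible and thus connected; if $c_k<0$, then $\mathcal{Q}(c_k)\cong\R^d\times S^{e-1}=\R^d\times S^0$, which is a disjoint union of two copies of $\R^d$ distinguished by the sign of the single output weight $W^{(2)}_{1k}$.

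Multiplying the component counts across the $l$ factors then gives $1^{l_++l_0}\cdot 2^{l_-}=2^{l_-}$, matching the claimed count and showing disconnectedness whenever $l_->0$.

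The hard part of this corollary is essentially nonexistent, since all the substantive work has been absorbed into \Cref{lemma:product}, \Cref{prop:homeo}, and \Cref{prop:betti}. The only subtlety worth flagging is the degenerate-looking identification $S^{e-1}=S^0=\{-1,+1\}$ when $e=1$: it is precisely this zero-dimensional sphere that breaks the factor $\mathcal{Q}(c_k)$ into two sheets, and hence it is precisely the scalar-output regime that produces the topological obstruction. In higher output dimensions, $S^{e-1}$ is connected and no such obstruction appears, which is consistent with the first branch of \Cref{eq:beta0}.
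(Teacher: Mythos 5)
Your proof is correct and takes essentially the same route as the paper, which gives no separate proof for this corollary precisely because it is an immediate specialization of \Cref{cor:beta0} (itself read off from \Cref{prop:betti}). Your supplementary direct argument — factoring $\mathcal{H}(c)$ via \Cref{lemma:product}, applying \Cref{prop:homeo} to each factor, and observing that the $c_k<0$ factors are $\R^d\times S^0$ while the others are connected when $d>1$ — is a clean unpacking of the same computation and correctly isolates the role of $S^0$ in producing the obstruction.
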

This means that neurons initialized with the norm of their outgoing weight strictly greater than their incoming weights' norm are responsible for disconnecting the space.
We now precisely identify which connected component a parameter $\theta$ belongs to and clarify the meaning of the obstruction. 
\begin{proposition}\label{prop:identify_component}
Let $e=1, d>1$, and $\theta\in\mathcal{H}(c)$ with $c$ such that $c_{k_1},\dots, c_{k_{l_-}}<0$ while $c_k\geq 0$ for all other $k$.
Let $W^{(2)}_-:=(W^{(2)}_{k_1}, \dots, W^{(2)}_{k_{l_-}})\in\R^{1\times l_-}$ be the row vector whose components are the components of $W^{(2)}\in\R^{1\times l}$ associated to $c_k<0$.
Then the vector $s(\theta) =(\mathrm{sign}(W^{(2)}_{k_1}), \dots, \mathrm{sign}(W^{(2)}_{k_{l_-}}))$ identifies uniquely the component $\theta$ belongs to, namely: $\theta$ and $\theta'$ belong to the same connected component of $\mathcal{H}(c)$ if and only if $s(\theta) = s(\theta')$.
\end{proposition}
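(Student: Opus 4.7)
The plan is to reduce the question to a component-counting on each factor of the product decomposition from \Cref{lemma:product}, and then show that on the negative-label factors the sign of the single output weight $W^{(2)}_k$ is exactly the continuous invariant that distinguishes the two components.

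First, I would invoke \Cref{lemma:product} to write $\mathcal{H}(c) \cong \mathcal{Q}(c_1)\times\cdots\times\mathcal{Q}(c_l)$, and then use the elementary fact that the connected components of a product of topological spaces are the products of connected components of the factors. By \Cref{prop:homeo}, when $c_k > 0$ (and $d>1$) the factor $\mathcal{Q}(c_k) \cong \mathbb{R}\times S^{d-1}$ is connected, and when $c_k = 0$ the factor is contractible, hence connected. So these factors contribute nothing to the decomposition into components.

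The substantive step concerns the factors with $c_k < 0$. Since $e=1$, the defining equation \eqref{eq:hyperbola} reads
\begin{equation*}
\sum_{i=1}^d \bigl(W^{(1)}_{ki}\bigr)^2 - \bigl(W^{(2)}_k\bigr)^2 = c_k < 0,
\end{equation*}
which forces $\bigl(W^{(2)}_k\bigr)^2 \geq -c_k > 0$, and in particular $W^{(2)}_k \neq 0$ on $\mathcal{Q}(c_k)$. The map $\theta\mapsto \mathrm{sign}(W^{(2)}_k)$ is therefore a continuous $\{-1,+1\}$-valued function on $\mathcal{Q}(c_k)$, so it is constant on each connected component. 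Conversely, each of the two level sets $\{W^{(2)}_k > 0\}\cap\mathcal{Q}(c_k)$ and $\{W^{(2)}_k < 0\}\cap\mathcal{Q}(c_k)$ is homeomorphic to $\mathbb{R}^d$ via the projection onto $W^{(1)}_k$ (the missing coordinate being recovered by $W^{(2)}_k = \pm\sqrt{\sum_i (W^{(1)}_{ki})^2 - c_k}$), hence connected. So $\mathcal{Q}(c_k)$ has exactly two components, labelled by $\mathrm{sign}(W^{(2)}_k)$.

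Combining these facts with the product decomposition, the connected components of $\mathcal{H}(c)$ are indexed precisely by the tuple $\bigl(\mathrm{sign}(W^{(2)}_{k_1}),\dots,\mathrm{sign}(W^{(2)}_{k_{l_-}})\bigr) = s(\theta)$, and two parameters lie in the same component iff their sign vectors agree. The only mildly delicate point is checking that the projection onto $W^{(1)}_k$ realizes each sign-sheet as a connected Euclidean space; everything else is a direct application of the earlier lemmas.
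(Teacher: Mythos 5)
Your proof is correct, and it takes a genuinely different route from the paper's. The paper proves both directions of the biconditional by hand: for sufficiency it constructs an explicit path from $\theta$ to $\theta'$ (routing each pathological coordinate $\theta_{k_i}$ through the "waist" point $(0, s(\theta)_i\sqrt{-c_{k_i}})$ of $\mathcal{Q}(c_{k_i})$), and for necessity it argues by contradiction via the intermediate value theorem that $W^{(2)}_k$ cannot pass through $0$. You instead reduce everything to the elementary topological fact that the connected components of a finite product are exactly the products of the factors' components, and then count components of each factor: the positive and zero factors are connected by \Cref{prop:homeo}, and for $c_k<0$ with $e=1$ you show directly (via the graph projection onto $W^{(1)}_k$, giving two copies of $\R^d$) that $\mathcal{Q}(c_k)$ has precisely two components labelled by $\mathrm{sign}(W^{(2)}_k)$. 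This is cleaner and more conceptual, and it sidesteps the paper's reliance on \Cref{lemma:connected_path} to pass between connectedness and path-connectedness, since $\R^d$ is manifestly path-connected. One small remark: you could have obtained the "two components" count for $\mathcal{Q}(c_k)$ directly from \Cref{prop:homeo}, which already gives $\mathcal{Q}(c_k)\cong\R^d\times S^0$ when $c_k<0$, $e=1$; your explicit projection is nonetheless the right thing to include, because it identifies \emph{which} component corresponds to which sign, which the abstract homeomorphism alone does not immediately pin down. The paper's constructive path-building, by contrast, is the style of argument that generalises more readily if one later wants quantitative control over the paths (lengths, smoothness, etc.).
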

\Cref{prop:identify_component}, proven in \Cref{proof:identify}, implies that $s(\theta)$ does not change when we move in $C$ on a continuous curve such as the one given by gradient flow.
This gives us an interesting interpretation of the topological obstruction: gradient flow cannot change the signs of the outgoing weights of the hidden neurons $k$ such that $c_k<0$ (see \Cref{appendix:intuition} for an intuitive explanation of the phenomenon).
\rev{This same observation is also mentioned in \citet{boursier2024early}.}
\Cref{prop:identify_component} extends one of the results of \citet{boursier2022gradient} which proves that the same also holds when $c_k = 0$ (balanced initialization).

By also considering \Cref{cor:beta0}, one obtains that a clever initialization of the parameters given by $\inner{\theta_0}{\theta_0}_k=c_k > 0\ \forall k=1,\dots, l$ can prevent the issue by ensuring the connectedness of the invariant set.
\rev{We also find that under common initialization schemes such as Xavier \citep{glorot2010understanding} and Kaiming \citep{he2015delving} the probability of having pathological neurons is negligible when the input dimension and number of hidden neurons is high (see \Cref{appendix:probability}).}

\section{Taking symmetries into account}
\Cref{cor:disconnected} states that neurons $k$ such that $c_k<0$ are \virg{pathological}, in the sense that they are responsible for disconnecting the invariant set into several components, whose number scales exponentially in the number of those neurons. 
This result gives us a grim picture of the possibility of actually optimizing the neural network:
if the initial parameter $\theta_0$ is in a particular connected component and the global optimum $\theta_*$ lies in another, then any gradient flow trajectory will not be able to reach $\theta_*$ because it will be constrained in its connected component. 

This result, however, provides us only with a partial picture of the parameter space's geometry.
It is a priori possible that the training trajectory, moving in its connected component, reaches a parameter $\zeta$, which itself is optimal as it is observationally equivalent to $\theta_*$ ($\zeta\sim\theta_*$).
In this case, the topological obstruction given by the non-connectedness would be only apparent.

To take this fact into account, we define the following notion.
\begin{definition}[Effective component]
Let $\theta\in \mathcal{H}(c)$ and $C(\theta)$ be its connected component therein. We define its effective component $\mathrm{Eff}(\theta)$ as the union of the connected component of all $\theta'$ such that $\theta'\simrp\theta$. So that $\mathrm{Eff}(\theta):=\bigcup_{\theta'\simrp \theta}C(\theta')$.

\end{definition}
\Cref{fig:quadrics}b gives a picture which clarifies the definition, showing a space with 4 connected components that has only 3 effective components.
If the optimum $\theta_*$ belongs to the same effective component as the initialization, then it is possible to reach a parameter that is observationally equivalent to it (through permutations and rescalings).

We present a useful result which tells us that the action of rescaling of \Cref{eq:rescaling_action} can take any non-degenerate parameter $\theta\in\mathcal{H}(c)$ to any other invariant set $\mathcal{H}(c')$ for every $c'\in\R^l$. 
This means that any invariant set can realize all the neural network's functions.

\begin{proposition}\label{prop:get_to_c_k}
For every $c_k\in\R$ and for every $\theta_k\in\Theta_k$ such that $W^{(1)}_{k}, W^{(2)}_{k} \neq 0$, there exists a unique $\alpha_k\in\R_+$ such that $T_{\alpha_k}(\theta_k)\in\mathcal{Q}(c_k)$.
If $W^{(1)}_k=0$ and $W^{(2)}_{k}\neq 0$, then the same holds for every $c_k<0$, while, if $W^{(1)}_k\neq 0$ and $W^{(2)}_{k}= 0$, it holds for every $c_k>0$.
\end{proposition}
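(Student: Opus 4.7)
The plan is to convert the statement $T_{\alpha_k}(\theta_k)\in\mathcal{Q}(c_k)$ into an explicit algebraic equation in the unknown scaling factor $\alpha_k$, and then solve it directly. Using the definition of the action $T_{\alpha_k}(\theta_k) = (\alpha_k W^{(1)}_k,\alpha_k^{-1} W^{(2)}_k)$ and writing the defining equation of $\mathcal{Q}(c_k)$ from \Cref{eq:hyperbola}, the condition becomes
\begin{equation*}
\alpha_k^2\,\|W^{(1)}_k\|^2 - \alpha_k^{-2}\,\|W^{(2)}_k\|^2 = c_k.
\end{equation*}
Set $a := \|W^{(1)}_k\|^2$, $b := \|W^{(2)}_k\|^2$, and $u := \alpha_k^2$, so that we are looking for a positive solution $u$ of $au^2 - c_k u - b = 0$, after which $\alpha_k=\sqrt{u}$ is unique in $\R_+$.

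In the generic case $W^{(1)}_k\neq 0$ and $W^{(2)}_k\neq 0$ we have $a>0$ and $b>0$, so the equation is a genuine quadratic whose discriminant $c_k^2+4ab$ is strictly positive for every $c_k\in\R$. The two roots are
\begin{equation*}
u_{\pm} = \frac{c_k\pm\sqrt{c_k^2+4ab}}{2a},
\end{equation*}
and since $\sqrt{c_k^2+4ab}>|c_k|$, exactly one of them (namely $u_+$) is positive. This yields existence and uniqueness of $\alpha_k\in\R_+$.

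For the degenerate cases it suffices to substitute: if $W^{(1)}_k=0$, $W^{(2)}_k\neq 0$ then $a=0$, $b>0$ and the equation reduces to $-b/u=c_k$, which has a positive solution $u=-b/c_k$ if and only if $c_k<0$, and it is unique; symmetrically, if $W^{(1)}_k\neq 0$, $W^{(2)}_k=0$ then $b=0$, $a>0$ and the equation reduces to $au=c_k$, which has the unique positive solution $u=c_k/a$ if and only if $c_k>0$. There is no real obstacle in the argument; the only point worth highlighting is the sign analysis of the two roots of the quadratic, which hinges on the elementary inequality $\sqrt{c_k^2+4ab}>|c_k|$ whenever $ab>0$.
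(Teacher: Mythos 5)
Your proof is correct and follows essentially the same route as the paper's: reduce the membership condition to the quadratic $a u^2 - c_k u - b = 0$ in $u=\alpha_k^2$, observe the discriminant $c_k^2+4ab$ is positive, pick the unique positive root via $\sqrt{c_k^2+4ab}>|c_k|$, and handle the degenerate cases $a=0$ or $b=0$ by direct substitution. No gaps.
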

The proof can be found in \Cref{proof:get_to_ck} with the formula of the specific $\alpha$ which realizes the rescaling.

The following theorem leverages the power of \Cref{prop:get_to_c_k} to give necessary and sufficient conditions for $\theta$ and $\theta'$ to belong to the same effective component.
\begin{theorem}\label{theo:effective}
Let $d>1$ and $e=1$. 
Let $c\in\R^l$ and $l_-$ be the number of neurons such that $c_k<0$.
Assume that $l_- \geq 1$. Let $C,C'\subseteq\mathcal{H}(c)$ be two distinct connected components of $\mathcal{H}(c)$ such that $s(\theta) = s$, $\forall\theta\in C,$ and $s(\theta') = s'$, $ \forall \theta'\in C'$.
Then, the following statements are equivalent:
\vspace{-0.3cm}
  \begin{enumerate}
    \itemsep-0.2em 
    \item for every $\theta\in C$ there exists $\theta'\in C'$ such that $\theta\simrp\theta'$;
    \item $\sum\limits_{i=1}^{l_-} s_i = \sum\limits_{i=1}^{l_-} s'_i$ 
  \end{enumerate}
\end{theorem}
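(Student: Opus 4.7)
The plan is to invoke \Cref{lemma:commute} to rewrite $\theta \simrp \theta'$ as $\theta' = P_\pi \circ T_\alpha(\theta)$ for some $\pi \in \mathfrak{S}_l$ and $\alpha \in \R_+^l$, and then exploit two elementary facts: each $\alpha_k > 0$ preserves the sign of the $k$-th column of $W^{(2)}$ (a scalar, since $e=1$), and the permutation $\pi$ only reorders these columns. Consequently, the only sign vectors $s(\theta')$ reachable from $\theta$ are those one can obtain by selecting $l_-$ ordered entries from the column-sign vector of $W^{(2)}(\theta)$ (one per pathological slot of $\theta'$). The question thus reduces to a combinatorial comparison of multisets.

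\textbf{Direction $(2) \Rightarrow (1)$.} From $\sum_i s_i = \sum_i s'_i$ together with $n_+(s)+n_-(s) = l_- = n_+(s')+n_-(s')$ I get $n_\pm(s) = n_\pm(s')$, hence a permutation $\tau$ of $\{1,\dots,l_-\}$ with $s_{\tau(i)} = s'_i$. Given $\theta \in C$, I set $\pi(k_{\tau(i)}) := k_i$ on the pathological indices and extend by the identity outside them. For any pathological $k$ the constraint $c_k < 0$ forces $W^{(2)}_k \neq 0$, so \Cref{prop:get_to_c_k} provides a rescaling $\alpha_k$ sending neuron $k$ onto $\mathcal{Q}(c_{\pi(k)})$, which is itself a negative level (this also covers the degenerate case $W^{(1)}_k = 0$). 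For non-pathological $k$ the identity choice $\alpha_k = 1$ suffices. Then $\theta' := P_\pi \circ T_\alpha(\theta) \in \mathcal{H}(c)$ has sign vector $s'$ by construction, so $\theta' \in C'$.

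\textbf{Direction $(1) \Rightarrow (2)$.} By \Cref{prop:homeo}, when $e=1$ and $d>1$ every non-pathological $\mathcal{Q}(c_k)$ is connected, so within $C$ the sign of each non-pathological $W^{(2)}_k$ is a free parameter. Pick $\theta_+ \in C$ with $W^{(2)}_k(\theta_+) > 0$ at every non-pathological $k$: its column-sign vector has exactly $n_-(s)$ negative entries, all located at pathological positions with $s_i = -1$. Applying hypothesis (1) yields some $\theta' \in C'$ with $\theta_+ \simrp \theta'$, and the reachability formula above forces the multiset of $s'$ to be a sub-multiset of that of the columns of $W^{(2)}(\theta_+)$. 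Comparing negative counts gives $n_-(s') \leq n_-(s)$. A symmetric choice $\theta_- \in C$ with $W^{(2)}_k < 0$ at every non-pathological $k$ yields $n_+(s') \leq n_+(s)$. Combined with $n_+(s')+n_-(s') = l_- = n_+(s)+n_-(s)$, both inequalities must be equalities, so $\sum_i s_i = n_+(s) - n_-(s) = n_+(s') - n_-(s') = \sum_i s'_i$.

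\textbf{Main obstacle.} The most delicate point is pairing $\pi$ with a rescaling $\alpha$ that lands back in $\mathcal{H}(c)$: since $c$ is not in general permutation-invariant, $T_\alpha$ must shift the $k$-th conserved quantity from $c_k$ to $c_{\pi(k)}$, and the degeneracy clauses of \Cref{prop:get_to_c_k} could in principle obstruct this. The trick in direction $(2) \Rightarrow (1)$ is to restrict $\pi$ to act only within the pathological indices, where $W^{(2)}_k \neq 0$ is automatic and the target level $c_{\pi(k)}$ is again negative, so the required $\alpha_k$ always exists.
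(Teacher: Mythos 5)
Your proof is correct, and it is worth noting that the two directions relate to the paper's argument differently. The direction $(2)\Rightarrow(1)$ follows essentially the same route as the paper: build a permutation $\pi$ supported on the pathological indices realizing the reshuffle of signs, observe that those neurons automatically have $W^{(2)}_k\neq 0$ because $c_k<0$, and invoke \Cref{prop:get_to_c_k} to find a rescaling back to $\mathcal{H}(c)$ (with $\alpha_k=1$ on the untouched non-pathological neurons). One small slip: with $P_\pi(\theta)=(R_\pi W^{(1)},W^{(2)}R_\pi^\top)$, neuron $k_i$ of $P_\pi(\theta)$ inherits its weights from neuron $\pi(k_i)$ of $\theta$, so to achieve $s(\theta')_i=s'_i$ you want $\pi(k_i):=k_{\tau(i)}$, not $\pi(k_{\tau(i)}):=k_i$; the latter yields $s_{\tau^{-1}(i)}$ rather than $s_{\tau(i)}=s'_i$. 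This is a $\tau$ vs.\ $\tau^{-1}$ relabelling and does not affect the substance.

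The direction $(1)\Rightarrow(2)$, on the other hand, is a genuinely different and in fact tighter argument than the paper's. The paper fixes an arbitrary $\theta\in C$ and writes $s(\theta')=s(\theta)R_{\pi_-}^\top$ for some $\pi_-\in\mathfrak{S}_{l_-}$, which implicitly requires the ambient permutation $\pi$ to map pathological indices to pathological indices; for a generic $\theta$ this is not automatic (a non-pathological neuron with $W^{(2)}_k\neq 0$ can be rescaled to a negative level). You sidestep this entirely by exploiting the universal quantifier in (1): choose $\theta_+\in C$ with all non-pathological output weights positive and $\theta_-\in C$ with all of them negative. Since $s(\theta')$ is obtained by picking $l_-$ distinct column-signs of $W^{(2)}(\theta)$, $\theta_+$ forces $n_-(s')\leq n_-(s)$ and $\theta_-$ forces $n_+(s')\leq n_+(s)$; combined with $n_+(s)+n_-(s)=l_-=n_+(s')+n_-(s')$ this pins down equality and hence equal sums. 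This bypasses the need to justify that $\pi$ restricts to a permutation of $\{k_1,\dots,k_{l_-}\}$, and is the cleaner way to handle that direction.
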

The theorem, proven in \Cref{proof:effective}, tells us that, while connected components are identified by $s$, the effective components are identified only by the values of $\sum_{i} s_i$ or, equivalently, by the distribution of $\pm 1$ in $s$. 
Therefore, we find that the number of effective components scales much slower than the exponential growth of the number of connected components given by \Cref{cor:beta0}.
\begin{corollary}\label{cor:number_effective_components}
The number of effective components of $\mathcal{H}(c)$ is given by $1
+l_-$.
\end{corollary}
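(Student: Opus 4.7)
The plan is to directly invoke \Cref{theo:effective} and reduce the problem to a simple counting exercise. By \Cref{cor:disconnected}, under the hypotheses $d > 1$, $e = 1$, and $l_- \geq 1$, the connected components of $\mathcal{H}(c)$ are in bijection with the sign vectors $s \in \sset{-1, +1}^{l_-}$ via the map $\theta \mapsto s(\theta)$ of \Cref{prop:identify_component}. \Cref{theo:effective} then tells us that two such components with sign vectors $s$ and $s'$ merge into the same effective component if and only if $\sum_{i=1}^{l_-} s_i = \sum_{i=1}^{l_-} s'_i$. Hence the effective components are in bijection with the image of the map $S : \sset{-1,+1}^{l_-} \to \mathbb{Z}$ defined by $S(s) = \sum_{i=1}^{l_-} s_i$, and we are left with computing $\abs{\mathrm{Im}(S)}$.

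For the counting step, I would write any $s \in \sset{-1, +1}^{l_-}$ as having exactly $k$ entries equal to $+1$ and $l_- - k$ entries equal to $-1$ for some $k \in \sset{0, 1, \dots, l_-}$, which gives $S(s) = 2k - l_-$. This value determines $k$ uniquely, so $S$ takes exactly $l_- + 1$ distinct values, namely $-l_-, -l_- + 2, \dots, l_- - 2, l_-$. This yields $l_- + 1$ effective components in the case $l_- \geq 1$.

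Finally, I would dispose of the edge case $l_- = 0$ separately, since \Cref{theo:effective} explicitly requires $l_- \geq 1$. When $l_- = 0$, \Cref{cor:beta0} (with $d > 1$, $e = 1$) gives $\beta_0 = 2^{0} = 1$, so $\mathcal{H}(c)$ has a single connected component and thus a single effective component, matching the formula $1 + l_- = 1$.

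The argument is essentially a bookkeeping step on top of \Cref{theo:effective}; the only subtlety is remembering to cover the edge case $l_- = 0$ which is excluded from the hypotheses of the theorem. No genuine obstacle is expected.
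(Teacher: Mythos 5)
Your proof is correct and follows essentially the same route as the paper: invoke \Cref{theo:effective} to reduce the problem to counting distinct values of $\sum_{i=1}^{l_-} s_i$ over $s \in \{-1,+1\}^{l_-}$, and observe there are $l_- + 1$ such values. Your explicit treatment of the $l_- = 0$ edge case (which the paper leaves implicit) is a small but welcome piece of additional care.
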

\begin{proof}
\Cref{theo:effective} tells us that two connected components $C,C'$ belong to the same effective component if and only if their associated sign vectors $s,s'\in\sset{-1,1}^{l_-}$ have the same sum.
The number of effective components will thus equal the number of different values that the sum $\sum_{i=1}^{l_-}s_i$ can have.
If $s_i = 1\ \forall i$ then $\sum_{i=1}^{l_-}s = l_-$. 
Each switch of a component to $-1$ decreases the sum's value by $2$ until it reaches the minimum $-l_-$.
Therefore, the total number of values of the sum will be $1 + l_-$.
\end{proof}


\section{Empirical Validation}\label{section:experiment}
\paragraph{Task, dataset, and model setup.}
We display here a toy example, showing how the initialization of the model can cause a topological obstruction, making the optimum unreachable. 

We consider the function  $F(x_1, x_2) = -(x_1 + x_2)$, which will be our ground-truth. 
Next, we generate a dataset of 8000 points $(x_i, F(x_i))$ by sampling $x_i \sim U(\left[ 0,1 \right]^2)$. 
Our model, depicted in \Cref{fig:experiment}a) is a one hidden layer neural network with 2 hidden neurons, ReLU activations and no biases. 
All the weights are initialized by independently sampling from  $U(\left [ -\sqrt{2}, \sqrt{2} \right])$. 
From the task and the network's architecture, it is clear that at least one of the output weights has to be negative to approximate $F$ correctly.

To standardize our results, we apply the rescaling of \Cref{prop:get_to_c_k} and relocate the initial parameters to an observationally equivalent one in the invariant set $\mathcal{H}(c)$ with $c_k\in\sset{-0.1, 0.1}$, controlling the sign of the weights on the last layer. 
We allow ourselves to do these two manipulations to control the experiments while only marginally modifying the network initialization, avoiding the introduction of massively unbalanced weights, which could change the dynamics, as shown in \citet{neyshabur2015path}.
Finally, we train the network using gradient descent \rev{on the MSE loss} with a small learning rate of $h = 0.01$. 
This limits the variations of $c_k$ values to less than one percent along training, giving us a good approximation of gradient flow.

\begin{figure}
    \centering
    \includegraphics[width=0.8\linewidth]{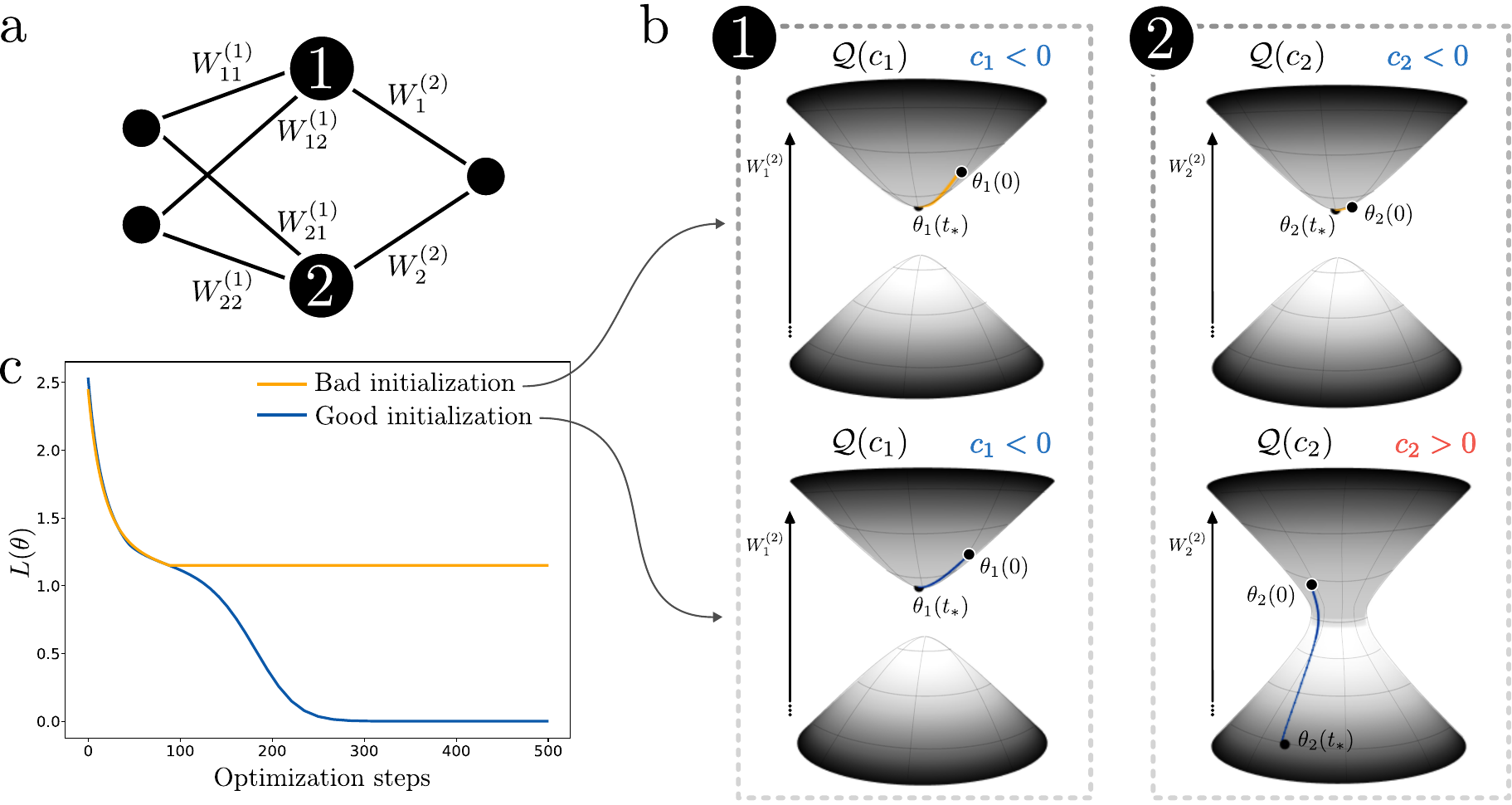}
    \caption{Visualization of the experimental setup described in \Cref{section:experiment}. {\bf a.} The small 2-layer neural network architecture considered. {\bf b.} The hidden neurons' parameter spaces, together with the invariant hyperquadrics associated with hidden neurons 1 (left) and 2 (right), for an initialization with topological obstruction (top) and without it (bottom). The colored curves represent the gradient descent trajectories from initialization $\theta_k(0)$ up to $t_* = 500$ optimization steps. {\bf c.} The loss curves for the bad (obstructed) and good initializations.}
    \label{fig:experiment}
\end{figure}
\paragraph{Results.} 
We initialize different models and collect all states and losses. 
First, when we initialize the model with an \virg{unlucky} configuration, namely $c=(-0.1, -0.1)$ (the space has 4 connected components) and $s(\theta) = (+1, +1)$, we find that the trajectories are confined to the positive region of their invariant hyperquadric, resulting in a poor approximation of $F$, as we can see in \Cref{fig:experiment}b (top) and in the loss of \Cref{fig:experiment}c.
Instead, with an initial configuration such that $c=(-0.1, +0.1)$ (2 connected components) and $s(\theta) = (+1, +1)$, the model can leverage the connectedness of $\mathcal{Q}(c_2)$ to learn $F$ by flipping the sign of the second neuron's output weight (\Cref{fig:experiment}b bottom right).

\paragraph{A more realistic experiment.}
\rev{We present here a further experiment to show how the topological obstruction can be a hindrance in a more realistic setting. 
We consider a simple binary classification task on the well-known \emph{breast cancer} dataset \cite{wolberg1995breast}, which we try to solve by fitting a one-layer ReLU neural network trained to minimize the BCE loss. 
We vary the number of hidden neurons $l$ and, for each $l$, we change the number of non-pathological neurons $l_+$ (neurons with $c_k>0$) from $0$ to $l$. 
We repeat the experiment with 100 different random initializations and show how the model's average performance changes when the degree of disconnectedness of its invariant set is varied. 
The result, on the left panel of \Cref{fig:real_data}, clearly shows the presence of a "gradient" in performance, where increasing the number of non-pathological neurons decreases the average value of the test loss after training.
The right panel of \Cref{fig:real_data}, moreover, shows how the impact of the obstruction depends on the number of non-pathological neurons and not on their fraction over the total number of hidden neurons.
}

\section{Conclusions}
In this paper, we have given analytical results that clarify the nature of the constraints imposed by gradient flow on the parameter space of a two-layer neural network with homogeneous activations.
In the case of a single scalar output, \rev{which appears in tasks such as binary classification and scalar regression,} we identified initial conditions that lead to a topological obstruction in the form of the parameter space's fragmentation into multiple connected components.
This is caused by pathological neurons whose output weights cannot change their sign during training.
Moreover, if one also considers the network's symmetries under permutations of the hidden neurons, we find that most of the connected components are equivalent. The number of effective components of the resulting space scales linearly with the number of pathological neurons, contrasting with the exponential growth of the number of connected components obtained without considering the permutation symmetries.

\rev{As shown in the last numerical experiment, the lack of non-pathological neurons hinders learning, even when the network's width is scaled. Our probabilistic analysis outlined in \Cref{appendix:probability}, however, shows that with common initialization schemes, the probability of creating a pathological neuron decreases rapidly with increased inner layer width. 
Therefore, the combination of specific initialization schemes and a large number of hidden neurons (beyond the minimum required to solve a task) appears to make this obstruction unlikely in practice.
This work describes a simple safeguard to avoid obstructions, which can, for instance, discourage the usage of initialization schemes that result in the proliferation of pathological neurons.}

\begin{figure}[!t]
    \centering
    \includegraphics[width=0.95\linewidth]{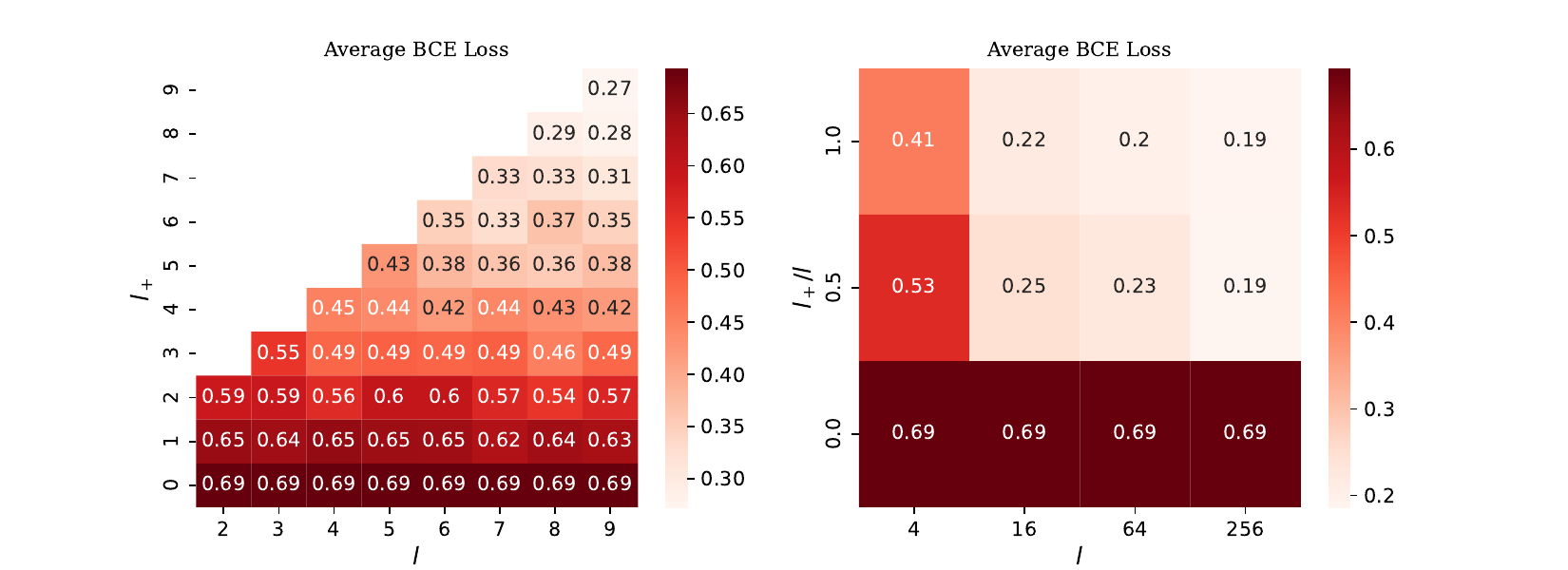}
    \caption{{\bf Left.} Average test BCE loss of a two-layer ReLU neural network trained on the \emph{breast cancer} dataset over 100 different initializations for each pair ($l$,$l_+$), $l = 2,\dots,9$ and $l_+\leq l$, of numbers of hidden neurons and non-pathological neurons. {\bf Right.} the y-axis displays the percentage of non-pathological neurons.}
    \label{fig:real_data}
\end{figure}

\section{Limitations}\label{section:limitations}
The main limitation of the work is the network's architecture, which is limited to only one hidden layer. 
Considering multiple layers, we can still define rescalings and permutations and find invariant hyperquadrics for each hidden neuron.
The issue emerges in the fact that these hyperquadrics are not \virg{independent} anymore, and the invariant set cannot be factored into the product of the $\mathcal{Q}(c_k)$.
This intuitively results from the fact that in the multi-layer case, each weight in the hidden layers is shared by two neurons.

The second limitation is that our study focuses on gradient flow optimization.
This idealized situation doesn't take into account the fact that moderate step size of gradient descent and stochastic gradient descent can break the conservation of $\inner{\theta}{\theta}_k$ and make the parameters drift away from the invariant set \citep{barrett2020implicit}.
Moreover, popular optimizers like ADAM \citep{Kingma2014AdamAM} update the parameters employing the gradients at previous iterations so their trajectories will not be constrained to lie on $\mathcal{H}(c)$ as we defined it.

\rev{The inclusion of regularization terms in the loss function, such as $\ell_p$ regularizations, also breaks the invariance to rescalings.}
\clearpage

\section*{Acknowledgements}
M.N. acknowledges the project PNRR-NGEU, which has received funding from the MUR – DM 352/2022. \\
F.V. would like to thank the Isaac Newton Institute for Mathematical Sciences for the support and hospitality during the programme Hypergraphs: Theory and Applications when work on this paper was undertaken. This work was supported by: EPSRC Grant Number EP/V521929/1\\
This study was carried out within the FAIR - Future Artificial Intelligence Research and received funding from the European Union Next-GenerationEU (PIANO NAZIONALE DI RIPRESA E RESILIENZA (PNRR) – MISSIONE 4 COMPONENTE 2, INVESTIMENTO 1.3 – D.D. 1555 11/10/2022, PE00000013). This manuscript reflects only the authors’ views and opinions; neither the European Union nor the European Commission can be considered responsible for them. 
\bibliographystyle{plainnat}
\bibliography{biblio}

\clearpage

\appendix

\section{Parameter spaces}\label{appendix:parameter}
Let $(e_{11}, e_{12},\dots,e_{ll})$ and $(e_1, \dots,e_l)$ be the canonical bases of the spaces $\R^{l\times l}$ and $\R^l$, respectively and
\[
\Theta_k =\sset{(e_{kk}W^{(1)},W^{(2)}e_{kk}) \ | \ (W^{(1)},W^{(2)})\in \Theta}\subset \Theta.
\] 
$\Theta_k$, notice, is the subspace of $\Theta$ consisting of the weight matrices $W^{(1)}$ with null rows except for the $k$-th one, and weight matrices $W^{(2)}$ with null columns except for the $k$-th one.
We can check that $\Theta=\Theta_1\oplus\cdots\oplus\Theta_l$, because, if $I_l$ is the $l\times l$ identity matrix, $\sum e_{kk}=I_l$ so that 
\[
(W^{(1)},W^{(2)})=\sum_k(W^{(1)}_k,W^{(2)}_k)=\sum_k(e_{kk}W^{(1)},W^{(2)}e_{kk}).
\]  
Moreover, $\Theta\cong\Theta_1\times\cdots\times\Theta_l$ via the linear isomorphism
\[
\theta=(W^{(1)},W^{(2)})\leftrightarrow (\theta_k)_{k=1}^l = \left((e_{kk}W^{(1)},W^{(2)}e_{kk})\right)_{k=1}^l.
\]
This, we see, is equivalent to decomposing the neural network of \Cref{eq:nn} into the computations of the single hidden neurons.
Indeed, let $f(x;\theta_k):=f(x,(e_{kk}W^{(1)},W^{(2)}e_{kk}))$, then, considering that $e_{kk}e_{kk}=e_{kk}$ and that $\sigma(e_{kk}v)=e_{kk}\sigma(v)$, it holds that 
\[
f(x;\theta_k)=W^{(2)}e_{kk}\sigma(W^{(1)}x)\ \forall k=1,\dots,l.
\]
Therefore $\sum_k f(x;\theta_k)=f(x;\theta)$.

\section{Primer on topology}\label{section:topology}
Here, we recall some basic facts about the topology required to understand the paper's results.
A self-consistent introduction is outside this work's scope, so we refer the interested reader to more complete expositions in \citet{munkres2000topology,munkres2018elements}.

\paragraph{Topological manifold.}
An $n$-dimensional \emph{topological manifold} is a topological space $X$ which locally looks like the Euclidean space $\R^n$.
More formally, for each $p\in X$, there exists a neighbourhood $U$ of $p$ and a homeomorphism mapping $U$ to an open subset of $\R^n$.

\paragraph{Contractible space.}
A topological space $X$ is \emph{contractible} if it can continuously deform to a point $p\in X$.
This means that there exists a continuous map 
\[
F:X\times [0,1] \to X
\]
such that $F(x,0) = x$ and $F(x,1) = p$ for every $x\in X$.

\paragraph{Betti numbers.}
Betti numbers formalize the notion of the hole in a topological space and extend it to describe higher-dimensional cavities. 
The general idea is that one can associate a sequence of Abelian groups named \emph{homology groups} to any space $X$, which encodes rich information about the higher-dimensional cavities in $X$.
For what we are concerned here, the rank of the $k$-th homology group is called the $k$-th \emph{Betti number} $\beta_k(X)$.
$\beta_k(X)$ counts the number of $k$-dimensional holes in the space: $\beta_0(X)$ count the number of connected components, $\beta_1(X)$ the number of \virg{circular} holes and $\beta_2(X)$ the number of voids or cavities.

A contractible space $X$ is connected and cannot have any holes, and thus its Betti numbers are $\beta_0(X) = 1$ and $\beta_i(X) = 0\ \forall i>0$.

Betti numbers are \emph{topological invariants}, meaning they are preserved when a space is transformed via a homeomorphism, namely a bijective, continuous map with continuous inverse.

\paragraph{Poincaré polynomials.}
The \emph{Poincaré polynomial} of a topological space $X$ is the polynomial whose $k$-th coefficient is given by the $k$-th Betti number
\[
p_X(x) = \beta_0(X) + \beta_1(X)x + \beta_2(X)x^2 + \dots.
\]

\paragraph{Künneth formula.}
Künneth's theorem describes assembling the homology groups of a Cartesian product of spaces $X\times Y$ from the homology groups of the factors $X,Y$.
One of its corollaries tells us that if we care only about Betti numbers, a simple relation holds between the Poincaré polynomials of $X\times Y$ and the ones of $X$ and $Y$, namely,
\[
p_{X\times Y}(x) = p_{X}(x) p_Y(x).
\]

\paragraph{Types of connectedness.}
In topology, \rev{there} are several kinds of connectedness. 
Two of them are particularly important for this work.

{\bf 1.)} A topological space $X$ is \emph{connected} if it cannot be divided into two disjoint non-empty open sets.
If it is not connected, the connected component of a point $x\in X$ is given by the union of all connected subsets of $X$ which contain $x$.

A topological space equal to the Cartesian product of two spaces $X = Y\times Z$ is connected if and only if $Y$ and $Z$ are both connected.

{\bf 2.)} A topological space is \emph{path-connected} if, for every pair of points $x,y\in X$, there exists a continuous curve $\gamma:[0,1]\to X$ such that $\gamma(0) = x,\ \gamma(1) = y$.
The \emph{path-component} of $x$ is the set of all $y\in X$ such that a continuous curve exists connecting $x$ to $y$.

This second notion is more relevant to our setting, where we care about the possible destinations of the optimization trajectories.

Path-connectedness implies connectedness, but not the opposite. 
There are situations, however, where these two notions are equivalent.
For example, when $X$ is a topological manifold, $X$ is connected if and only if $X$ is path connected.

Notice that the 0-th Betti number $\beta_0(X)$ counts the number of connected components but, in general, not the number of path components.
With \Cref{lemma:connected_path}, we prove that these two notions are equivalent for our object of study.

\section{Extra propositions and lemmas}

\begin{lemma}\label{lemma:connected_path}
    The invariant set $\mathcal{H}(c)$ is connected if and only if it is path connected.
\end{lemma}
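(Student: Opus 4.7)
The plan is to leverage the product decomposition of $\mathcal{H}(c)$ given by \Cref{lemma:product} and then handle each factor individually using \Cref{prop:homeo}. Specifically, the strategy rests on two separate structural facts: (i) connectedness and path-connectedness each behave well under finite Cartesian products, i.e.\ $X_1\times\cdots\times X_l$ is connected (resp.\ path-connected) if and only if every factor $X_i$ is connected (resp.\ path-connected); and (ii) each factor $\mathcal{Q}(c_k)$ belongs to a class of spaces for which connectedness and path-connectedness are equivalent.

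First, I would recall from point (i) above that it suffices to prove the equivalence factor by factor: if each $\mathcal{Q}(c_k)$ satisfies "connected iff path-connected", then their product $\mathcal{H}(c)\cong\mathcal{Q}(c_1)\times\cdots\times\mathcal{Q}(c_l)$ inherits the same equivalence, because both notions of connectedness separate across finite products in exactly the same way.

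Next, I would split into two cases according to \Cref{prop:homeo}. When $c_k\neq 0$, the factor $\mathcal{Q}(c_k)$ is a topological manifold (homeomorphic to $\R^e\times S^{d-1}$ or $\R^d\times S^{e-1}$). Topological manifolds are locally Euclidean and therefore locally path-connected, and any locally path-connected space is connected if and only if it is path-connected (its connected components coincide with its path-components). When $c_k=0$, the factor $\mathcal{Q}(0)$ is contractible; a contractible space admits a continuous deformation to a point, so in particular any two of its points can be joined by a path through that point, making it path-connected (and hence connected) unconditionally. In both cases the desired equivalence holds for the single factor.

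Putting these pieces together yields the lemma. I do not foresee a genuinely hard step: the main thing to be careful about is invoking the correct standard topological facts (finite products preserving both flavors of connectedness, local path-connectedness of manifolds implying path-components $=$ components, and contractibility implying path-connectedness), and making sure the $c_k=0$ case is not overlooked in the factor-wise analysis.
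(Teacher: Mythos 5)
Your proposal is correct and follows essentially the same route as the paper: decompose $\mathcal{H}(c)$ via \Cref{lemma:product}, then show the equivalence factor by factor using \Cref{prop:homeo} (manifold when $c_k\neq 0$, contractible when $c_k=0$), and recombine through the behavior of (path-)connectedness under finite products. The only cosmetic difference is that the paper exhibits an explicit piecewise path through the origin for the $c_k=0$ case, whereas you invoke the general fact that a contractible space is path-connected — both are standard and correct.
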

\begin{proof}
\Cref{lemma:product} tells us that $\mathcal{H}(c)\cong\mathcal{Q}(c_1)\times\cdots\times\mathcal{Q}(c_l)$.

Let us focus on a particular $\mathcal{Q}(c_k)$.

When $c_k\neq 0$, \Cref{prop:homeo} tells us that $\mathcal{Q}(c_k)$ is a topological manifold, and thus, it is connected if and only if it is path connected.

When $c_k =0$, $\mathcal{Q}(0)$ is not a topological manifold but contractible, implying that it is connected.
Let us prove that it is also path-connected.

Let $\theta_k,\theta'_k\in\mathcal{Q}(0)$ and define the curve $\gamma:[0,1]\to\mathcal{Q}(0)$
\[
\gamma_{k;\theta}(t) = t\cdot\theta_k
\]
such that $\gamma_{k;\theta}(0) = \theta,\ \gamma_{k;\theta}(1) = 0$.
$\gamma_{k;\theta}(t)\in\mathcal{Q}(0)$ for every $t\in [0,1]$ because
\[
\inner{\gamma_{k;\theta}(t)}{\gamma_{k;\theta}(t)}_k = t\inner{\theta}{\theta}_k = 0.
\]

Therefore, the segment from $\theta_k$ to $0$ belongs to $\mathcal{Q}(0)$.

A continuous curve from $\theta_k$ to $\theta'_k$ can be thus obtained by 
\[
\gamma_{k;\theta'}\gamma_{k;\theta}(t) :=
\begin{cases}
    \gamma_{k;\theta}(2t) & \text{ if } t\in [0,\frac{1}{2}]\\
    \gamma_{k;\theta'}(2-2t)  & \text{ if } t\in [\frac{1}{2},1]
\end{cases}
\]
which is continuous because $ \gamma_{k;\theta}(1) = \gamma_{k;\theta'}(1) = 0$.
Therefore $\mathcal{Q}(0)$ is path connected.

Finally, if $\mathcal{H}(c)$ is connected, then all of its factors $\mathcal{Q}(c_k)$ are connected, which, in turn, is true if and only if they are path-connected.
A product of path-connected space is again path-connected, and therefore $\mathcal{H}(c)$ is path-connected.
The other implication is true because path-connectedness implies connectedness, thus concluding the proof.
\end{proof}

\begin{lemma}[Interchange of rescalings and permutations]\label{lemma:commute}
Let $\theta\in\Theta$, $\alpha\in\R^l_+$ and $\pi\in\mathfrak{S}_l$, then, if $\Tilde{\alpha} = R_{\pi^{-1}}(\alpha)$
\begin{equation}
    T_\alpha P_\pi (\theta) = P_{\pi} T_{\Tilde{\alpha}}(\theta).
\end{equation}
\end{lemma}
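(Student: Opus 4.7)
The plan is to expand both sides of the claimed identity using the explicit formulas for $T_\alpha$ and $P_\pi$, and then reduce everything to a single commutation relation between a diagonal matrix and a permutation matrix. Write $\theta = (W^{(1)}, W^{(2)})$. Applying the definitions in \Cref{eq:neuron_permutation} and \Cref{eq:rescaling_action} gives
\begin{equation*}
    T_\alpha P_\pi(\theta) = \bigl(\diag(\alpha)\,R_\pi W^{(1)},\ W^{(2)} R_\pi^\top \diag(\alpha)^{-1}\bigr),
\end{equation*}
\begin{equation*}
    P_\pi T_{\tilde\alpha}(\theta) = \bigl(R_\pi\,\diag(\tilde\alpha) W^{(1)},\ W^{(2)}\diag(\tilde\alpha)^{-1} R_\pi^\top\bigr).
\end{equation*}
So the lemma reduces to showing the single matrix identity $\diag(\alpha)\,R_\pi = R_\pi\,\diag(\tilde\alpha)$, since the analogous identity for the second components then follows by taking inverses (note $\diag(\cdot)^{-1}$ commutes with the same permutation in the same way, and $R_\pi^{-1} = R_\pi^\top$).

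The core computation I would perform next is the conjugation of a diagonal matrix by a permutation matrix. Using the convention from \Cref{eq:neuron_permutation} that $R_\pi$ is the row-permutation matrix of $\pi$, a direct index calculation shows $R_\pi^\top \diag(\alpha) R_\pi = \diag(R_{\pi^{-1}} \alpha)$, where $R_{\pi^{-1}}\alpha$ is the vector obtained by permuting the entries of $\alpha$ with $\pi^{-1}$. Equivalently, $\diag(\alpha) R_\pi = R_\pi \diag(R_{\pi^{-1}}\alpha)$. Taking $\tilde\alpha = R_{\pi^{-1}}\alpha$, exactly as stated in the lemma, therefore yields $\diag(\alpha) R_\pi = R_\pi \diag(\tilde\alpha)$.

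Plugging this back into the expansions of $T_\alpha P_\pi(\theta)$ and $P_\pi T_{\tilde\alpha}(\theta)$ makes the first components agree immediately. For the second components, the same identity in the form $R_\pi^\top \diag(\alpha)^{-1} = \diag(\tilde\alpha)^{-1} R_\pi^\top$ (obtained by transposing and inverting, using $R_\pi^{-1} = R_\pi^\top$ and that inverses of diagonals are diagonal with reciprocated entries) shows $W^{(2)} R_\pi^\top \diag(\alpha)^{-1} = W^{(2)} \diag(\tilde\alpha)^{-1} R_\pi^\top$, concluding the proof.

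The only real pitfall I anticipate is keeping the permutation conventions consistent: one has to be careful that "row-permutation matrix of $\pi$" sends the $i$-th coordinate to the $\pi(i)$-th (or $\pi^{-1}(i)$-th) slot in the same way throughout, otherwise one will pick up $\tilde\alpha = R_\pi\alpha$ instead of $R_{\pi^{-1}}\alpha$. Once the convention is fixed and the identity $R_\pi^\top \diag(\alpha) R_\pi = \diag(R_{\pi^{-1}}\alpha)$ is verified by a short index check, the rest is a one-line substitution.
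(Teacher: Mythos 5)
Your proof is correct and takes essentially the same route as the paper's: both expand the two sides and reduce the lemma to the single matrix identity $\diag(\alpha)R_\pi = R_\pi\diag(\tilde\alpha)$, verified by an index computation on the permutation matrix. Your write-up is slightly more explicit about deducing the second-component identity by transposing/inverting, whereas the paper only works out the first-component identity and leaves the rest implicit, but the core argument is identical.
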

\begin{proof}
Given that
\[
T_\alpha P_\pi (\theta) = (\mathrm{diag}(\alpha)R_\pi W^{(1)}, W^{(2)} R^\top_{\pi} \mathrm{diag}(\alpha)^{-1})
\]
we need to prove that $\mathrm{diag}(\alpha)R_\pi =R_\pi \mathrm{diag}(\Tilde{\alpha})$.

\[
(\mathrm{diag}(\alpha)R_\pi)_{ij} = \sum_{k=1}^l \mathrm{diag}(\alpha)_{ik}(R_\pi)_{kj} = \alpha_{i}(R_\pi)_{ij} =
\begin{cases}
    \alpha_i &\text{ if } j = \pi(i)\\
    0 &\text{ otherwise}
\end{cases}.
\]

Let us pick a generic $\Tilde{\alpha}\in\R_+^l$.
\[
(R_\pi\mathrm{diag}(\Tilde{\alpha}))_{ij} = \sum_{k=1}^l (R_\pi)_{ik}\mathrm{diag}(\Tilde{\alpha})_{kj} = (R_\pi)_{ij}\Tilde{\alpha}_j = 
\begin{cases}
    \Tilde{\alpha}_j &\text{ if } j = \pi(i)\\
    0 &\text{ otherwise}
\end{cases}.
\]
Let us consider the inverse permutation $\pi^{-1}$ so that $\pi^{-1}(j) = i$ if $\pi(i) = j$.
Then, if $\Tilde{\alpha} = R_{\pi^{-1}}\alpha$,
\[
\Tilde{\alpha}_j = \alpha_{\pi^{-1}(j)} = \alpha_i
\]
and thus we get that $\mathrm{diag}(\alpha)R_\pi =R_\pi \mathrm{diag}(\Tilde{\alpha})$.
\end{proof}

\section{Proofs}\label{sec:proofs}
\subsection{Proof of Lemma \ref{lemma:product}}
\begin{proof}
\Cref{prop:decomp} tells us that the invariant set can be decomposed as the direct sum of the single hidden neurons' parameter spaces. 
This means that, for every $\theta\in\Theta$, there exist unique $\theta_1\in\Theta_1,\dots,\theta_l\in\Theta_l$ such that
\[
\theta = \theta_1 + \theta_2 + \cdots + \theta_l.
\]
Therefore, we have a linear isomorphism $\varphi:\Theta_1\times\cdots\times\Theta_k\to\Theta$
\[
\varphi: (\theta_1,\dots,\theta_l) \mapsto \theta_1  + \cdots + \theta_l = \theta.
\]

The invariant set is a subset of $\Theta$, which is given as the set of solutions of $l$ equations $\inner{\theta}{\theta}_k = c_k\ k=1,\dots,l$.
Notice that each of these equations involves a set of variables that appear only in that particular equation.
These variables are exactly the ones which belong to $\Theta_k$.
In fact $\inner{\theta}{\theta}_k = \inner{\theta_k}{\theta_k}_k$.

Therefore, given $\theta_1\in\mathcal{Q}(c_1),\dots,\theta_l\in\mathcal{Q}(c_l)$ we have that 
\[
\varphi(\theta_1,\dots,\theta_l) = \sum_{k=1}^l \theta_k \in \mathcal{H}(c).
\]
On the opposite, given $\theta\in\mathcal{H}(c)$ we have that
\[
\varphi^{-1}(\theta) = (\theta_1,\dots,\theta_l)\in\mathcal{Q}(c_1)\times\cdots\times\mathcal{Q}(c_l).
\]
Therefore, $\mathcal{H}(c)$ is in bijection with $\mathcal{Q}(c_1)\times\cdots\times\mathcal{Q}(c_l)$ through $\varphi$ which, being a linear isomorphism, implies also that $\mathcal{H}(c)$ and $\mathcal{Q}(c_1)\times\cdots\times\mathcal{Q}(c_l)$ are homeomorphic.
\end{proof}

\subsection{Proof of Proposition \ref{prop:homeo}}\label{proof:homeo}
\begin{proof}
Let us consider the three cases separately.
If $c_k>0$, $\mathcal{Q}(c_k)$ is defined by the equation 
\[
\sum_{i=1}^d \left(W^{(1)}_{ki}\right)^2 - \sum_{j=1}^e \left(W^{(2)}_{jk}\right)^2 = c_k \iff \norm{W^{(1)}_k}_F^2 - \norm{W^{(2)}_k}_F^2 = c_k,
\]
where $\norm{\cdot}_F$ is the Frobenius norm of a matrix, i.e., the square root of the sum of the squares of its elements. 
This can be rewritten as
\begin{equation}\label{eq:norm}
    \norm{W^{(1)}_k}_F = \sqrt{c_k + \norm{W^{(2)}_k}_F^2}
\end{equation}
where $\sqrt{c_k + \norm{W^{(2)}_k}_F^2}>0$ because $c_k>0$.
We define the map $h:\mathcal{Q}(c_k)\to S^{d-1}\times\R^e$ as
\begin{equation}\label{eq:diffeo}
    h\left(W^{(1)}_k,W^{(2)}_k\right) = \left(\frac{W^{(1)}_k}{\sqrt{c_k + \norm{W^{(2)}_k}^2_F}}, W^{(2)}_k\right)
\end{equation}
where, notice, the first component belongs to the sphere $S^{d-1}$ because of \Cref{eq:norm} and $W^{(2)}_k\in\R^e$.
This map is bijective, differentiable and has the following inverse $h^{-1}:S^{d-1}\times\R^e\to\mathcal{Q}(_k)$
\[
h^{-1}(u,x) = (\sqrt{c_k+\norm{v}_F^2}\ u,x)
\]
which is differentiable.
Therefore, $h$ is a diffeomorphism from $\mathcal{Q}(c_k)$ to $S^{d-1}\times\R^e$. 

If $c_k<0$, we write the equation of $\mathcal{Q}(c_k)$ as
\[
\norm{W^{(2)}_k}_2 = \sqrt{-c_k + \norm{W^{(1)}_k}_2^2}
\]
where $W^{(1)}_k$ and $W^{(2)}_k$ have switched their role to guarantee the term on the right to be positive.
The diffeomorphism is now built analogously to \Cref{eq:diffeo} as a map $h:\mathcal{Q}(c_k)\to S^{e-1}\times\R^d$.

If $c_k = 0$, we prove that $\mathcal{Q}(0)$ is a contractible space.
To do that, we exhibit a homotopy equivalence between $\mathcal{Q}(0)$ and the point $0$, i.e. a continuous map $p:[0,1]\times \mathcal{Q}(0) \to \mathcal{Q}(0)$ such that $p(0,\theta_k) = \theta_k$ and $p(1,\theta_k) = 0\ \forall\theta_k\in\mathcal{Q}(0)$.
The map is defined in the following way:
\[
p(\lambda,\theta_k) = (1-\lambda)\theta_k.
\]
This is continuous and well-defined because
\[
\inner{p(\lambda,\theta_k)}{p(\lambda,\theta_k)}_k = \inner{(1-\lambda)\theta_k}{(1-\lambda)\theta_k}_k = (1-\lambda)^2\inner{\theta_k}{\theta_k}_k = 0,
\]
meaning that $p(\lambda,\theta_k)\in\mathcal{Q}(0)$ for every $\theta_k\in\mathcal{Q}(0)$ and for every $\lambda\in [0,1]$.
\end{proof}

\subsection{Proof of Theorem \ref{prop:betti}}\label{proof:betti}
\begin{proof}
An implication of the Künneth formula is that the Poincaré polynomial of the Cartesian product of two spaces is equal to the product of their Poincaré polynomials:
\[
p_{X\times Y}(x) = p_X(x)p_Y(y).
\]

Starting from \Cref{prop:homeo}, we can apply this result to $\mathcal{Q}(c_k)$.
\begin{equation}\label{eq:pol1}
p_{\mathcal{Q}(c_k)} =
\begin{cases}
    p_{\R^e}(x)p_{S^{d-1}}(x)\ &\text{ if } c_k>0\\
    p_{\R^d}(x)p_{S^{e-1}}(x)\ &\text{ if } c_k<0\\
    1 \ &\text{ if } c_k=0
\end{cases}
\end{equation}
because a contractible space has 1 connected component and all of its other Betti numbers equal to zero.

Moreover, we know that $\R^n$ is contractible for any $n$ and its Poincaré polynomial is $p_{\R^n}(x) = 1$.
The Poincaré polynomial of the sphere $S^n$ is given by $p_{S^n}(x) = 1+x^n$.

\Cref{eq:pol1} becomes 
\begin{equation}\label{eq:pol2}
p_{\mathcal{Q}(c_k)} =
\begin{cases}
    1+x^{d-1}\ &\text{ if } c_k>0\\
    1+x^{e-1}\ &\text{ if } c_k<0\\
    1 \ &\text{ if } c_k=0
\end{cases}.
\end{equation}

Given that \Cref{lemma:product} tells us that $\mathcal{H}(c)$ can be factored into the product of the $\mathcal{Q}(c_k)$, we apply Künneth formula and find that
\begin{equation}
p_{\mathcal{H}(c)}(x) = p_{\mathcal{Q}(c_1)}(x)\cdots p_{\mathcal{Q}(c_l)}(x) = (1+x^{d-1})^{l_+}(1+x^{e-1})^{l_-}.
\end{equation}

\end{proof}

\subsection{Proof of Proposition \ref{prop:identify_component}}\label{proof:identify}
\begin{proof}
In the following, we exploit \Cref{lemma:connected_path} and use the terminology connected and path-connected interchangeably.

Let us first prove that $s(\theta) = s(\theta')$ means that $\theta$ and $\theta'$ belong to the same connected component.
We do this by explicitly building a continuous curve $\delta:[0,1]\to\mathcal{H}(c)$ such that $\delta(0)=\theta$ and $\delta(1)=\theta'$.

Let us proceed by leveraging the homeomorphism from $\mathcal{H}(c)$ and $\mathcal{Q}(c_1)\times\cdots\times\mathcal{Q}(c_l)$ and consider the different components of $\delta$ in the invariant hyperquadrics associated to each neuron $\delta = (\delta_1,\dots,\delta_l)$.

If $c_k\geq 0$, we know that $\mathcal{Q}(c_k)$ is path-connected and therefore we fix $\delta_k(t)$ to any continuous curve in $\mathcal{Q}(c_k)$ such that $\delta_k(0) = \theta_k$ and $\delta_k(1)=\theta'_k$.

If $c_k<0$ for $k\in K:=\sset{k_1,\dots,k_{l_-}}\subseteq\sset{1,\dots,l}$, we define the curve $\gamma_{i;\theta}:[0,1]\to\mathcal{Q}(c_{k_i})$ with
\[
\gamma_{i;\theta}(t) = \left((1-t)W^{(1)}_{k_{i}1},\dots,(1-t)W^{(1)}_{k_{i}l},s(\theta)_i\sqrt{-c_{k_{i}} + (1-t)^2\norm{W^{(1)}_{k_i}}^2_F}\right)
\]
for every $i=1,\dots,l_-$.

$\gamma_{i;\theta}$ is a continuous curve which connects the point $\gamma_{i;\theta}(0) = \theta_{k_i}$ with $\gamma_{i;\theta}(1) = (0,s(\theta)_i\sqrt{-c_{k_i}})$.

If we define $\Bar{\gamma}_{i;\theta}(t):=\gamma_{i;\theta}(1-t)$, which is the same curve as $\gamma_{i;\theta}$ but traversed in the opposite direction, we can define the curve
\[
\delta_{k_i}(t) = \Bar{\gamma}_{i;\theta'}\gamma_{i;\theta}(t)
\]
i.e. the curve which travels on $\gamma_{i;\theta}$ for $t\in[0,\frac{1}{2}]$ and on $\Bar{\gamma}_{i;\theta'}$ for $t\in[\frac{1}{2},1]$,
for $i=1,\dots,l_-$.

Notice now that $\delta_{k_i}(0) = \theta_{k_i}$ and $\delta_{k_i}(1) = \theta'_{k_i}$. 
Moreover $\delta_{k_i}$ is continuous because
\[
\gamma_{i;\theta}(1) = (0,s(\theta)_i\sqrt{-c_{k_i}}) = (0,s(\theta')_i\sqrt{-c_{k_i}}) = \Bar{\gamma}_{i;\theta'}(0)
\]
under the hypothesis that $s(\theta) = s(\theta')$.

Finally, we found a continuous curve $\delta= (\delta_1,\dots,\delta_l)$ such that $\delta(t)\in\mathcal{H}(c)\ \forall t\in [0,1]$ and $\delta(0)=\theta,\ \delta(1) = \theta'$.
Therefore, $\theta$ and $\theta'$ belong to the same connected component.

Let us now prove that if $\theta$ and $\theta'$ belong to the same connected component, then $s(\theta) = s(\theta').$

Let $\gamma:[0,1]\to\mathcal{H}(c)$ be a continuous curve in $\mathcal{H}(c)$ such that $\gamma(0) = \theta$ and $\gamma(1) = \theta'$.

For each $k\in K =\sset{k_1,\dots,k_{l_-}}$ such that $c_k<0$, we know that $\gamma_k(t)\in \mathcal{Q}(c_k)$ means that
\[
\gamma_k(t) = \bigg(\gamma^{(1)}_{k1}(t),\dots,\gamma^{(1)}_{kd}(t),\underbrace{s_k(t)\sqrt{-c_k + \norm{\gamma^{(1)}_k}^2_F}}_{\gamma^{(2)}_k(t)}\bigg)
\]
for some function $s_k(t)\in\sset{-1,1}$ such that $s_k(0) = s(\theta)_k$ and $s_k(1) = s(\theta')_k$.

Assume, by contradiction, that $s(\theta)_k = -s(\theta')_k$.
Assume also that $s(\theta)_k = +1$ and $s(\theta')_k = -1$.

Notice that $c_k <0$ implies that $p(t):=\sqrt{-c_k + \norm{\gamma^{(1)}_k}^2_F} >0$.

The function $\gamma^{(2)}_k(t) = s_k(t)p(t)$, then, is a continuous function such that $\gamma^{(2)}_k(0) > 0$ and $\gamma^{(2)}_k(1) < 0$ and thus, by the intermediate value theorem, there exists $t_*\in (0,1)$ such that $\gamma^{(2)}_k(t_*) = 0$.

But $\gamma^{(2)}_k(t)\neq 0$ for every $t$, as $s_k(t)\in\sset{-1,1}$ and $\sqrt{-c_k + \norm{\gamma^{(1)}_k}^2_F} >0$.

Repeating the argument for $s(\theta)_k = -1$ we then prove by contradiction that $s(\theta)_k = s(\theta')_k$ for all $k\in K$, thus concluding the proof.
\end{proof}

\subsection{Proof of Proposition \ref{prop:get_to_c_k}}\label{proof:get_to_ck}
\begin{proof}
We have by \Cref{eq:inner} and \Cref{eq:rescaling_action_neuron}:
\[
    \inner{T_{\alpha}(\theta)}{T_{\alpha}(\theta)}_k -c_k = 0 \iff \alpha_k^2\sum_{i=1}^d (W^{(1)}_{ki})^2 - \frac{1}{\alpha_k^2}\sum_{j=1}^e (W^{(2)}_{jk})^2 - c_k =0
\]

By renaming $A=\sum\limits_{i=1}^d (W^{(1)}_{ki})^2 = \norm{W^{(1)}_k}_F^2$, $C=\sum\limits_{j=1}^e (W^{(2)}_{jk})^2= \norm{W^{(2)}_k}_F^2$ and multiplying by $\alpha^2_k>0$ we have:

\begin{equation}\label{eq:resc_eq}
A\alpha_k^4 - c_k\alpha_k^2 - C=0
\end{equation}

Solving for $\alpha_k^2$ gives us:
\[
\Delta = c_k^2+4AC \geq 4AC>0
\]
\[
\alpha_k^2=\frac{c_k\pm\sqrt{\Delta}}{2A}
\]

Given that we want $\alpha_k>0$, we discard the negative solution.
The other is positive because $\Delta>c_k^2$ and thus $\sqrt{\Delta}>|c_k|.$
\[
\alpha_k = \pm\sqrt{\frac{c_k+\sqrt{\Delta}}{2A}}
\]
Of which we keep the positive solution only, with its full expression being:

\begin{equation}\label{eq:rescaling_alpha_formula}
\alpha_k = \sqrt{\frac{c_k+\sqrt{
c_k^2+4\norm{W^{(1)}_k}_F^2\norm{W^{(2)}_k}_F^2}}
{2\norm{W^{(1)}_k}_F^2}}.
\end{equation}

Hence, if $\alpha = (\alpha_1,\dots,\alpha_l)$ with $\alpha_k$ given by \Cref{eq:rescaling_alpha_formula}, we get that $\inner{T_{\alpha}(\theta)}{T_{\alpha}(\theta)}_k = c_k\ \forall k=1,\dots,l$.

Let us consider now the pathological cases $W^{(1)}_k = 0$ or $W^{(2)}_k = 0$.

If $W^{(1)}_k = 0, W^{(2)}_k\neq 0$ then $A = 0, C\neq 0$. 
Therefore, we have that \Cref{eq:resc_eq} becomes 
\[
-c_k\alpha_k^2  - C = 0 
\]
which has solutions if and only if $c_k<0$.
In that case $\alpha_k = \frac{\norm{W^{(2)}_k}_F}{\sqrt{-c_k}}$.
This means that a hidden neuron with zero input weights and nonzero output weights can be rescaled only to the invariant hyperquadrics with $c_k<0$.

If $W^{(2)}_k = 0, W^{(1)}_k\neq 0$ then $C = 0, A\neq 0$.
Therefore, we have that \Cref{eq:resc_eq} becomes 
\[
\alpha_k^2  A  - c_k = 0 
\]
which has solutions if and only if $c_k>0$.
In that case $\alpha_k = \frac{\sqrt{c_k}}{\norm{W^{(1)}_k}_F}$.
This means that a hidden neuron with zero output weights and nonzero input weights can be rescaled only to the invariant hyperquadrics with $c_k>0$.

If $W^{(1)}_k = 0$ and $W^{(2)}_k = 0$, then $\theta_k = 0\in\mathcal{Q}(0)$ and it cannot be rescaled to any other invariant hyperquadric.

\end{proof}

\subsection{Proof of Theorem \ref{theo:effective}}\label{proof:effective}
\begin{proof}
Let us first prove that, if for every $\theta\in C$ there exists a $\theta'\in C'$ such that $\theta\simrp\theta'$ then $\sum_{i=1}^{l_-}\limits s(\theta)_i = \sum_{i=1}\limits^{l_-} s(\theta')_i$.

First, \Cref{lemma:commute} tells us that we can interchange rescaling and permutation if we permute the rescaling factors accordingly.
This means we can reduce any composite action of rescalings and permutations to the action of a single rescaling and a single permutation.

Let $\theta\in C$ and $\theta'\in C'$ such that $\theta\simrp\theta'$.
Then there exist $\alpha\in\R^l_+$ and $\pi\in\mathfrak{S}_l$ such that
\begin{equation}\label{eq:ob_eq}  
T_{\alpha}P_{\pi}(\theta) = \theta'.
\end{equation}
This means that $T_{\alpha}P_{\pi}(\theta)$ and $\theta'$ belong to the same invariant set and, specifically, to the same connected component.
Therefore,
\[
s(T_{\alpha}P_{\pi}(\theta)) = s(\theta').
\]
Notice that
\[
s(T_\alpha P_{\pi}(\theta)) = s(P_{\pi}(\theta))
\]
because $T_\alpha$ does not change the sign of $W^{(2)}$ as it acts by scaling it by positive factors.
Let us focus on 
\[
s(P_{\pi}(\theta)) = \mathrm{sign}((W^{(2)}R^\top_\pi)_-).
\]
If the neurons of $\theta$ such that $c_k<0$ are indexed by $k_1,k_2,\dots,k_{l_-}$, we will have that the neurons of $P_{\pi}(\theta)$ such that $c_k<0$ are indexed by $\pi(k_1),\pi(k_2),\dots,\pi(k_{l_-})$.
Therefore
\[
s(P_{\pi}(\theta))_i = \mathrm{sign}(W^{(2)}_{\pi(k_i)}) = (s(\theta)R_{\pi_-}^\top)_i
\]
for some permutation $\pi_-\in\mathfrak{S}_{l_-}$.
Therefore, \Cref{eq:ob_eq} implies that
\[
s(\theta') = s(P_\pi(\theta)) = s(\theta)R^\top_{\pi_-}.
\]

The action of rescaling and permutation can only reshuffle the label $s$ of the connected component.
This means that
\[
s(\theta)R_{\pi_-}^\top = s(\theta') \implies \sum_{i=1}^{l_-} (s(\theta)R_{\pi_-}^\top)_i = \sum_{i=1}^{l_-} s(\theta')_i \implies \sum_{i=1}^{l_-} s(\theta)_i =\sum_{i=1}^{l_-} s(\theta')_i.
\]

Let us now prove the other implication. 
Let $s,s'\in\R^{l_-}$ such that $\sum\limits_{i=1}^{l_-} s_i = \sum\limits_{i=1}^{l_-} s'_i$.

Given that their sum is equal, $s$ and $s'$ have the same number of $+1$ and $-1$ and thus there exists a permutation $\pi_-\in\mathfrak{S}_{l_-}$ such that $s'=s R^\top_{\pi_-}$.

Let $\pi\in\mathfrak{S}_l$ be the permutation which permutes the neurons such that $c_k<0$ according to $\pi_-$ and leaves the others fixed.
In this way $s(P_\pi(\theta)) = s R_{\pi_-}^\top = s'$.

$P_\pi(\theta)$, however, doesn't belong to $\mathcal{H}(c)$ but to another invariant set given by $\mathcal{H}(R_\pi c)$. 

Applying \Cref{prop:get_to_c_k} we can find a rescaling $\alpha = \alpha(\pi)\in\R^l_+$ such that $T_\alpha P_\pi(\theta)\in\mathcal{H}(c)$.

Since neurons such that $c_k\geq 0$, are left unchanged by the permutation, we can apply the proposition and we rescale them with $\alpha_k = 1$.
The permuted neurons are the ones such that $c_k<0$, namely the ones whose weights satisfy $\norm{W^{(1)}_k}_F^2 - \norm{W^{(2)}_k}_F^2 <0$,  meaning that $W^{(2)}_k \neq 0$.

As noted above, the action of the rescaling doesn't change the sign vector, and thus
\[
s(T_\alpha P_\pi(\theta)) = s(P_\pi(\theta)) = s'.
\]
If we name $\theta' := T_\alpha P_\pi(\theta)$ this result means that we found a $\theta'\simrp\theta$ such that $\theta'\in C'$, thus concluding the proof. 
\end{proof}

\section{Including biases}\label{appendix:bias}
Let us consider the case where we include biases.
The resulting two-layer neural network can be written as
\begin{equation}
    f(x;\theta) = W^{(2)}\sigma(W^{(1)}x + b^{(1)}) + b^{(2)},
\end{equation}
where $b^{(1)}\in\R^l$ and $b^{(2)}\in\R^e$.

To work with this extended set of parameters, we re-define the space
\[
\Theta  = \sset{\theta = (W^{(1)},b^{(1)},W^{(2)},b^{(2)})}
\]
and the single hidden neuron spaces
\[
\Theta_k = \sset{\theta_k = (e_{kk} W^{(1)}_k, e_{kk} b^{(1)}_k, W^{(2)}_k e_{kk})}
\]
where the second bias term $b^{(2)}$ does not appear because it is not directly involved with the computations of the hidden neurons.
This means that we can write
\[
\Theta \cong \Theta_1\times\cdots\times\Theta_l\times\R^e
\]
where $\R^e$ is included to describe the parameters in $b^{(2)}$.

The neuron rescaling action now acts on the biases $b^{(1)}$ as well as the weights:
\begin{equation}  
\setlength\arraycolsep{0pt}
T\colon \begin{array}[t]{ >{\displaystyle}r >{{}}c<{{}}  >{\displaystyle}l } 
          \R_+\times\Theta_k &\to& \Theta_k \\ 
          (\alpha,\theta_k) &\mapsto& T_\alpha(\theta_k)= (\alpha W^{(1)}_k, \alpha b^{(1)}_k,  \frac{1}{\alpha} W^{(2)}_k)
         \end{array}
\end{equation}  
and can be extended to the whole space of parameters
\begin{equation} 
T\colon \begin{array}[t]{ >{\displaystyle}r >{{}}c<{{}}  >{\displaystyle}l } 
          \R_+^l\times\Theta &\to& \Theta \\ 
          (\alpha,\theta) &\mapsto& T_\alpha(\theta)= (\diag(\alpha) W^{(1)}, \diag(\alpha) b^{(1)}, W^{(2)}\diag(\alpha)^{-1}, b^{(2)}).
         \end{array}
\end{equation}
Once again, we find that $T_\alpha\theta \sim \theta$.

In this more general case, we can rewrite the bilinear form to include the biases.  
If $\theta = (W^{(1)},b^{(1)},W^{(2)}, b^{(2)})$ and $\eta = (V^{(1)},p^{(1)},V^{(2)}, p^{(2)})$, we define
\begin{equation}\label{eq:inner_2}
    \inner{\theta}{\eta}_k = \sum_{i=1}^d W^{(1)}_{ki}V^{(1)}_{ki}  + b^{(1)}_k p^{(1)}_k - \sum_{j=1}^e W^{(2)}_{jk}V^{(2)}_{jk}
\end{equation}
and see that, once gradient flow optimization, we have a conservation condition like the one of \Cref{eq:balance}
\[
\inner{\theta(t)}{\theta(t)}_k = c_k \forall t>0\ \forall k=1,\dots, l.
\]

Once again, we call $\mathcal{Q}(c_k)$ the hypersurface of $\Theta_k$ which satisfies the equation $\inner{\theta}{\theta}_k = c_k$ and $\mathcal{H}(c_k)$ the set in $\Theta$ defined by $\inner{\theta}{\theta}_k = c_k \ \forall k=1,\dots,l$.

With this in mind, it is not hard to extend the results of \Cref{prop:homeo} and \Cref{prop:betti} which turn out to be slightly modified.

\begin{proposition}\label{prop:homeo_2}
If $c_k > 0$, $\mathcal{Q}(c_k)$ is a topological manifold homeomorphic to $\R^e\times S^{d}$. 
If $c_k<0$, $\mathcal{Q}(c_k)$ is a topological manifold homeomorphic to $\R^d\times S^{e-1}$. 
If $c_k = 0$, $\mathcal{Q}(0)$ is contractible.
\end{proposition}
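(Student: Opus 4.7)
The plan is to mirror the strategy used to establish Proposition \ref{prop:homeo}, treating the three cases $c_k>0$, $c_k<0$, $c_k=0$ separately. The only substantive change compared to the bias-free case is that the extended bilinear form \eqref{eq:inner_2} adds one more ``positive'' coordinate $b^{(1)}_k$ to the input side, so the squared-norm of $(W^{(1)}_k, b^{(1)}_k)$ is now a quantity in $\R^{d+1}$ rather than $\R^d$. Accordingly, any sphere appearing on the positive side will go up by one dimension, while the negative side is untouched. With this in mind, I expect the derivation to be essentially routine.

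For $c_k>0$, I would rewrite the defining equation $\langle\theta,\theta\rangle_k = c_k$ as
\[
\bigl\|(W^{(1)}_k, b^{(1)}_k)\bigr\|_F = \sqrt{c_k + \|W^{(2)}_k\|_F^2},
\]
where the right-hand side is strictly positive since $c_k>0$. I would then define, in analogy with \eqref{eq:diffeo},
\[
h\bigl(W^{(1)}_k, b^{(1)}_k, W^{(2)}_k\bigr) = \left(\frac{(W^{(1)}_k, b^{(1)}_k)}{\sqrt{c_k + \|W^{(2)}_k\|_F^2}},\ W^{(2)}_k\right),
\]
check that its image lies in $S^{d}\times\R^e$, and exhibit its smooth inverse. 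For $c_k<0$, I would swap the roles of input- and output-side coordinates: write $\|W^{(2)}_k\|_F = \sqrt{-c_k + \|(W^{(1)}_k, b^{(1)}_k)\|_F^2}$, which is again strictly positive, and construct the analogous diffeomorphism onto $\R^{d+1}\times S^{e-1}$ (so as to reach the manifold claimed in the statement).

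For $c_k=0$, the same linear contraction used in the proof of Proposition \ref{prop:homeo} works verbatim: the map
\[
p(\lambda, \theta_k) = (1-\lambda)\theta_k
\]
sends $\mathcal{Q}(0)$ into itself because $\langle (1-\lambda)\theta_k, (1-\lambda)\theta_k\rangle_k = (1-\lambda)^2 \langle\theta_k,\theta_k\rangle_k = 0$, and it deformation-retracts $\mathcal{Q}(0)$ onto the origin. Since the bilinear form \eqref{eq:inner_2} is still a homogeneous quadratic form in all the parameters $(W^{(1)}_k, b^{(1)}_k, W^{(2)}_k)$, nothing needs to change here.

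The only potential subtlety, and the step I would check most carefully, is the verification that $h$ is a genuine homeomorphism onto its image in the cases $c_k\neq 0$: one has to confirm that the first factor indeed ranges over the full sphere (not just an open subset) and that the inverse map $h^{-1}(u, x) = \bigl(\sqrt{c_k + \|x\|_F^2}\, u,\, x\bigr)$ (resp.\ its $c_k<0$ analog) is well-defined and continuous on all of $S^\bullet\times\R^\bullet$. This amounts to observing that the radius functions $\sqrt{c_k + \|W^{(2)}_k\|_F^2}$ (for $c_k>0$) and $\sqrt{-c_k + \|(W^{(1)}_k, b^{(1)}_k)\|_F^2}$ (for $c_k<0$) are strictly positive and smooth on the respective domains, which is immediate from the sign of $c_k$.
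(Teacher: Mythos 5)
Your proof is correct and follows the same strategy the paper implicitly relies on (Proposition~\ref{prop:homeo_2} is stated without a separate proof, with the paper noting it is a routine extension of Proposition~\ref{prop:homeo}). One remark: your computation for the case $c_k<0$ correctly yields $\R^{d+1}\times S^{e-1}$, since the input side $(W^{(1)}_k,b^{(1)}_k)$ now spans $d+1$ coordinates. This does \emph{not} match the statement, which says $\R^{d}\times S^{e-1}$ — but the statement is the one at fault: $\mathcal{Q}(c_k)$ must have dimension $d+e$ (the ambient $\Theta_k$ has dimension $d+1+e$ and the hyperquadric is cut out by one equation), and $\R^{d}\times S^{e-1}$ has dimension $d+e-1$. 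Since $\R^{d+1}$ and $\R^{d}$ are not homeomorphic, this is a genuine (though inconsequential) typo in the paper. Your parenthetical ``(so as to reach the manifold claimed in the statement)'' is therefore slightly misleading and should instead flag the discrepancy; note that the ensuing Proposition~\ref{prop:betti_2} and Corollary~\ref{cor:beta0_2} are unaffected, since the Euclidean factor is contractible and contributes trivially to the Poincaré polynomial.
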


In this case, we can factor the space of parameters $\mathcal{H}(c)$ as
\[
\mathcal{H}(c) \cong \mathcal{Q}(c_1)\times\cdots\times \mathcal{Q}(c_l)\times\R^{e},
\]
where the last factor is due to the freedom in choosing the values of $b^{(2)}$.
\begin{proposition}\label{prop:betti_2}
Let $c_k\neq 0\ \forall k=1,\dots,l$. Let $l_+,l_-,l_0$ be the number of positive, negative and zero elements of $c$, respectively. 
The Poincaré polynomial of $\mathcal{H}(c)$ is given by
\begin{equation}
    p_{\mathcal{H}(c)}(x) = (1+x^{d})^{l_+}(1+x^{e-1})^{l_-}
\end{equation}
\end{proposition}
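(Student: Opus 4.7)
The plan is to adapt the proof strategy of \Cref{prop:betti} almost verbatim to the bias-augmented setting. The two key inputs are already in place: \Cref{prop:homeo_2} classifies each individual hyperquadric $\mathcal{Q}(c_k)$ up to homeomorphism, and the factorization $\mathcal{H}(c)\cong\mathcal{Q}(c_1)\times\cdots\times\mathcal{Q}(c_l)\times\R^{e}$ stated just before the proposition decomposes the full invariant set into independent factors. The extra $\R^e$ factor corresponds to the unconstrained parameter $b^{(2)}$, which does not enter any of the bilinear forms $\inner{\cdot}{\cdot}_k$ and therefore does not interact with the hyperquadric equations. Once this decomposition is in hand, the whole argument reduces to a Künneth computation on Poincaré polynomials.

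First I would read off the Poincaré polynomial of each factor by combining \Cref{prop:homeo_2} with the standard invariants $p_{\R^n}(x)=1$ and $p_{S^n}(x)=1+x^n$. For $c_k>0$ we have $\mathcal{Q}(c_k)\cong\R^e\times S^d$, so $p_{\mathcal{Q}(c_k)}(x)=1+x^d$; for $c_k<0$ we have $\mathcal{Q}(c_k)\cong\R^d\times S^{e-1}$, so $p_{\mathcal{Q}(c_k)}(x)=1+x^{e-1}$. The hypothesis $c_k\neq 0$ for every $k$ discards the contractible case $c_k=0$ (which would in any event contribute a harmless factor of $1$), and the trailing $\R^e$ factor from $b^{(2)}$ contributes another $1$.

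Second, I would invoke the Künneth-derived product rule $p_{X\times Y}(x)=p_X(x)\,p_Y(x)$ across the full factorization, obtaining
\[
p_{\mathcal{H}(c)}(x)=\prod_{k:\,c_k>0}(1+x^{d})\ \cdot\prod_{k:\,c_k<0}(1+x^{e-1})\ \cdot\ 1 \ =\ (1+x^{d})^{l_+}(1+x^{e-1})^{l_-},
\]
which is precisely the claimed expression.

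I do not expect any step here to be a serious obstacle: the substantive content was already absorbed into \Cref{prop:homeo_2} and into the product decomposition of $\mathcal{H}(c)$. The only place where a careful reader should verify the bookkeeping is the dimension shift in the positive-$c_k$ case, where the degree jumps from $d-1$ (no-bias setting of \Cref{prop:betti}) to $d$. This shift reflects exactly one extra coordinate, $b^{(1)}_k$, now contributing to $\inner{\theta}{\theta}_k$ on the input side, so the relevant level set in the input variables is an $S^d$ rather than an $S^{d-1}$; the negative-$c_k$ case is unchanged because the output side of $\inner{\cdot}{\cdot}_k$ is unaffected by the biases.
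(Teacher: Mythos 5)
Your proof is correct and matches the approach the paper implicitly relies on (the paper does not spell out a proof for this proposition, only noting that it follows by the same route as \Cref{prop:betti}): classify each factor $\mathcal{Q}(c_k)$ via \Cref{prop:homeo_2}, read off $p_{\mathcal{Q}(c_k)}$ using $p_{\R^n}=1$ and $p_{S^n}=1+x^n$, and multiply via the Künneth product rule across the factorization $\mathcal{H}(c)\cong\mathcal{Q}(c_1)\times\cdots\times\mathcal{Q}(c_l)\times\R^e$. Your remark on the $d-1\to d$ degree shift and on why the negative-$c_k$ sphere stays $S^{e-1}$ is exactly the right bookkeeping; note in passing that for $c_k<0$ the Euclidean factor is really $\R^{d+1}$ (the extra bias coordinate lives on the input side), not $\R^d$ as \Cref{prop:homeo_2} states, but since $\R^n$ is contractible for any $n$ this discrepancy has no effect on the Poincaré polynomial.
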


\begin{corollary}\label{cor:beta0_2}
The $0$-th Betti number $\beta_0(c)$ of $\mathcal{H}(c)$, corresponding to the number of its connected components, is given by
\begin{equation}
    \beta_0(c) =
    \begin{cases}
        1 &\text{ if } e>1\\
        2^{l_-} &\text{ if } e=1\\
    \end{cases}
\end{equation}
\end{corollary}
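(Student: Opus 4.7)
The plan is to read off the constant term of the Poincar\'e polynomial supplied by Proposition \ref{prop:betti_2}, since the $0$-th Betti number of any space is by definition the coefficient of $x^0$ in its Poincar\'e polynomial. Thus the entire task reduces to extracting the constant coefficient of
\[
p_{\mathcal{H}(c)}(x) = (1+x^{d})^{l_+}(1+x^{e-1})^{l_-},
\]
and the only thing that varies with the hypotheses is the exponent $e-1$ on the second factor.

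First I would handle the factor $(1+x^d)^{l_+}$. Since $d \geq 1$ (the input dimension of the network is at least one), every monomial appearing in the binomial expansion other than the product of all $1$'s carries a strictly positive power of $x$. Hence the constant term of $(1+x^d)^{l_+}$ is simply $1$, independently of $l_+$. This step contributes nothing beyond the factor $1$ in both cases of the corollary.

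Next I would split on the value of $e$. If $e > 1$, then $e - 1 \geq 1$ and the same argument applies to $(1+x^{e-1})^{l_-}$: its constant term is $1$, so $\beta_0 = 1 \cdot 1 = 1$. If $e = 1$, then $x^{e-1} = x^0 = 1$ and the second factor collapses to the constant polynomial $(1+1)^{l_-} = 2^{l_-}$. Multiplying by the constant term $1$ from the first factor yields $\beta_0 = 2^{l_-}$. This matches the two cases in the statement.

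There is really no obstacle here, because Proposition \ref{prop:betti_2} has already done the homological work: the corollary is a one-line evaluation at $x = 0$ combined with the bookkeeping observation that $(1+x^0)^{l_-}$ degenerates to $2^{l_-}$ when $e=1$. The only thing worth emphasizing in the write-up is the contrast with the bias-free case of Corollary \ref{cor:beta0}: because with biases the first factor acquires the exponent $d$ rather than $d-1$, the possibility $d=1$ no longer triggers disconnectedness, so only the $e=1$ branch survives.
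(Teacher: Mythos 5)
Your proof is correct and follows the same route the paper implicitly uses: Corollary \ref{cor:beta0_2} is obtained, exactly as the analogous Corollary \ref{cor:beta0} in the main text, by reading off the constant coefficient of the Poincar\'e polynomial from Proposition \ref{prop:betti_2}, with the $e=1$ case handled by observing that $(1+x^{0})^{l_-}=2^{l_-}$. Your closing remark about why the $d=1$ branch disappears in the bias case is a nice and accurate observation.
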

The result we obtain is similar to \Cref{cor:beta0} although slightly modified by the fact that having a single input neuron does not cause $\mathcal{H}(c)$ to become disconnected anymore.
In the case of $e=1$, therefore, the picture presented in the main text is left unchanged.

\section{Probability of obstruction}\label{appendix:probability}

\begin{figure}
    \centering
    \includegraphics[width=0.8\linewidth]{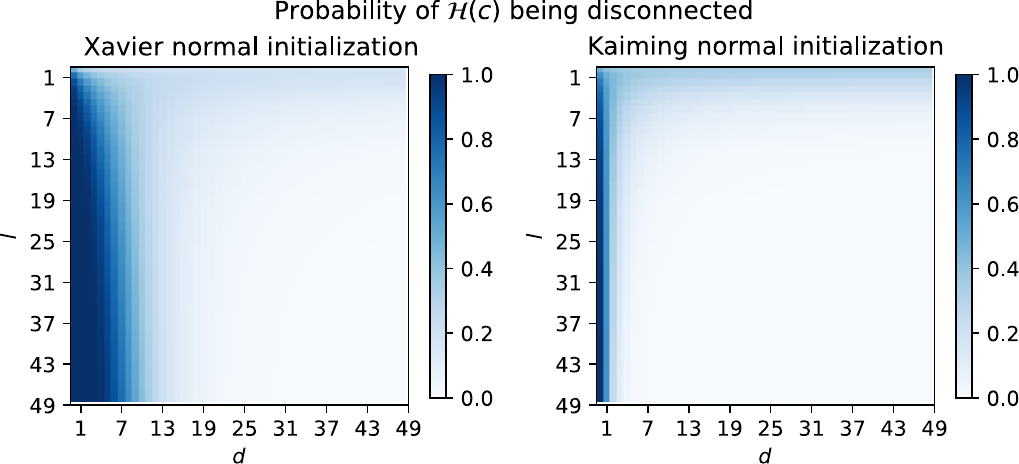}
    \caption{Probability of the topological obstruction as a function of the number of input $d$ and hidden $l$ neurons, when the initial weights are sampled with Xavier normal (left) and Kaiming normal (right) initialization schemes.}
    \label{fig:prob_obstruction}
\end{figure}

Let us consider the following question: \textit{what is the probability of having a disconnected invariant set given a realistic initialization?} 

Consider a one-layer ReLU neural network with $e=1$ and assume that the weights are sampled independently of one another from a normal distribution $W^{(1)}_{ki}\sim\mathcal{N}(0,\sigma_1^2)\ \forall k,i$ , $W^{(2)}_{k}\sim\mathcal{N}(0,\sigma_2^2)\ \forall k,j$.
From \Cref{cor:disconnected} we know that the invariant set $\mathcal{H}(c)$ will be disconnected if and only if there exists a hidden neuron satisfying  $\sum_{i=1}^d (W^{(1)}_{ki})^2 < (W^{(2)}_k)^2$.
Given independence of the initial weight sampling, this probability can be computed as 
\[
\mathbb{P}[\text{obstruction}] = 1-\mathbb{P}[\sum_{i=1}^d (W^{(1)}_{ki})^2 > (W^{(2)}_k)^2]^l = 1-(F_{1,d}(d\sigma_1^2/\sigma_2^2))^l,
\]
where $F$ is the cumulative distribution function of the Fisher-Snedecor distribution.

Having obtained this general expression, we can specify it to two common initialization schemes.
\begin{itemize}
    \item We obtain \emph{Kaiming initialization} \cite{he2015delving} with $\sigma_1^2 = 2/d, \sigma_2^2 = 2/l$ resulting in $\mathbb{P}[\text{obstruction}] = 1 - F_{1,d}(l)^l$.
    \item We obtain \emph{Xavier normal initialization} \cite{glorot2010understanding} with $\sigma_1^2 = 2/(d+l), \sigma_2^2 = 2/(1+l)$ resulting in $\mathbb{P}[\text{obstruction}] = 1 - F_{1,d}(\frac{d+ld}{d+l})^l$.
\end{itemize}

We plot these two expressions in \Cref{fig:prob_obstruction}.
We can see how, for large values of $d$, the probability of obstruction quickly falls to 0 for any number of hidden neurons.
Instead, we see an opposite trend for small values of $d$: the probability of disconnectedness grows with $l$. 
Moreover, it is interesting to notice that the region of high obstruction probability is much larger for Xavier initialization than for Kaiming initialization, further showing why the latter is preferred when working with ReLU networks.

\section{Intuition on the occurrence of obstruction}\label{appendix:intuition}
We can give some intuition on why there is no obstruction for multiple outputs. 
First, we consider a single hidden neuron $k$, with $d$ incoming weights and a single output $e=1$. 
If the neuron is pathological, we have that 
\[
\sum_{i=1}^d \left(W^{(1)}_{ki}\right)^2 < \left(W^{(2)}_{1k}\right)^2.
\]
Since the weights $W^{(a)}_{ij}(t)$ are continuous curves in time, for $W^{(2)}_{1k}$ to change sign, its value needs to pass through 0 but, under the condition above, this cannot happen its square is always positive.

Consider now multiple outputs $e>1$, resulting in the conservation condition being 
\[
\sum_{i=1}^d \left(W^{(1)}_{ki}\right)^2 < \sum_{j=1}^e\left(W^{(2)}_{jk}\right)^2.
\]
Now, any component $W^{(2)}_{jk}$ can change sign by passing through 0 because the other components can compensate for it by increasing their magnitude to keep the condition satisfied.

\end{document}